\documentclass[manuscript, screen, review]{jair}

\setcopyright{cc}
\RequirePackage[
  datamodel=acmdatamodel,
  style=acmauthoryear,
  backend=biber,
  giveninits=true,
  uniquename=init
  ]{biblatex}

\usepackage{enumitem}
\usepackage{amsthm}

\newtheorem{definition}{Definition}
\newtheorem{remark}{Remark}
\usepackage{newunicodechar}
\renewcommand\footnotetextcopyrightpermission[1]{}

\begin{document}

%%
%% The "title" command has an optional parameter,
%% allowing the author to define a "short title" to be used in page headers.
%\title{JAIR Example Template}
\title{The Dynamic Gist-Based Memory Model (DGMM):  A Memory-Centric Architecture for Artificial Intelligence}

%%
%% The "author" command and its associated commands are used to define
%% the authors and their affiliations.
%% Of note is the shared affiliation of the first two authors, and the
%% "authornote" and "authornotemark" commands
%% used to denote shared contribution to the research and/or corresponding author.
\author{Terry Dorsey}
\authornote{Corresponding Author}
\orcid{0009-0000-9405-4795}
\email{tdorsey1@harrisburgu.edu}
\affiliation{%
  \institution{Harrisburg University of Science and Technology}
  \city{Hanover}
  \state{Pennsylvania}
  \country{USA}
}

\author{Kevin Huggins}
\authornote{Senior Author}
\orcid{0009-0008-3989-4677}
\email{khuggins@harrisburgu.edu}
\affiliation{%
  \institution{Harrisburg University of Science and Technology}
  \city{Hanover}
  \state{Pennsylvania}
  \country{USA}
}

%% The short list of authors must be made of the list of all authors' lastnames.

\renewcommand{\shortauthors}{Dorsey}
%% If this is too long and overlaps other information printed in the page headers, use
%\renewcommand{\shortauthors}{Xu et al.}

%%
%% The abstract is a short summary of the work to be presented in the
%% article.
\begin{abstract}
{\bf Background:} 
Contemporary artificial intelligence systems achieve strong performance through large-scale parameterization, retrieval augmentation, and training on extensive static corpora. Despite these advances, such systems continue to face limitations in persistent memory, temporal grounding, provenance, and interpretability. These challenges are particularly pronounced in large language models, where experience is encoded implicitly in fixed parameters, constraining the ability to preserve, inspect, and reinterpret past interactions over time.

{\bf Objectives:}
This paper aims to establish a memory-centric architectural foundation for artificial intelligence in which experience is represented explicitly, persistently, and in a form that supports temporal grounding, provenance, and interpretability. The goal is to define an alternative to parameter-centric approaches by treating memory as a first-class, structured substrate for reasoning.

{\bf Methods:}
The paper introduces the Dynamic Gist-Based Memory Model (DGMM), an architectural framework in which experience is represented as an evolving, graph-structured episodic–semantic memory. DGMM encodes experience as interconnected conceptual structures grounded in time, source, and interaction context, and defines selective, cue-conditioned recall as the mechanism through which working memory is constructed. A formal characterization of the representational schema is provided, along with a set of architectural invariants derived from commitments to additive memory growth and recall-conditioned interpretation.

{\bf Results:}
The primary results consist of formally specified properties of the DGMM architecture, including episodic persistence, locality of cue-conditioned surprise, and contextual variability without structural modification of stored memory. These properties establish internal consistency and define constraints on admissible memory operations and recall behavior.

{\bf Conclusions:} 
DGMM provides a coherent architectural theory in which memory is explicit, persistent, and structurally grounded. By decoupling memory storage from downstream interpretation, the framework supports evolving perspectives without retraining or modification of stored representations. This work establishes a foundation for future research into interpretable, context-aware, and temporally grounded artificial intelligence systems.
\end{abstract}

%\begin{abstract}
%      A clear and well-documented \LaTeX\ document is presented as an
%  article formatted for publication by ACM in a conference proceedings
%  or journal publication. Based on the ``acmart'' document class, this
%  article presents and explains many of the common variations, as well
%  as many of the formatting elements an author may use in the
%  preparation of the documentation of their work.
%\end{abstract}

%% JAIR Note: 
%% Do not include ACM CCS Concepts or Keywords

%% To be updated by authors.
%\received{20 May 2026}
%\received[accepted]{31 Dec 2027}

\maketitle
\makeatletter
\fancyhf{}
\fancyfoot[L]{\today}
\fancyfoot[C]{\thepage}
\pagestyle{fancy}
\thispagestyle{fancy}
\makeatother

\section{Introduction}
The relationship between memory and intelligence is not incidental.  To reason about the past, anticipate the future, or maintain coherent understanding across time, a cognitive system must be able to preserve experience, retrieve it selectively, and reinterpret it as context and perspective evolve.  These capacities are not merely useful features of an intelligent system, they are architectural preconditions for the kind of temporal, contextual, and provenance-aware reasoning that distinguishes genuine understanding from pattern recognition.

Contemporary artificial intelligence systems have achieved remarkable performance without satisfying these preconditions in any deep sense.  Large language models encode experience implicitly in fixed parameters learned from static corpora, producing systems that are powerful but architecturally amnesiac.  Each inference begins from the same parametric state, with no persistent record of what the system has encountered, when it encountered it, or from whom.  Retrieval-augmented approaches extend this by introducing access to external data, but retrieved content is assembled transiently at inference time and discarded afterward, leaving no evolving memory substrate that accumulates, develops, or can be examined across interactions.  The result is a class of systems that exhibit the outputs of intelligence without the underlying continuity of experience that makes those outputs interpretable, trustworthy, or genuinely grounded.  

These are not incidental engineering limitations.  They reflect a deeper architectural choice: to treat memory as implicit, transient, or auxiliary rather than as a first-class component of the system's design.  The consequences of this choice (e.g., hallucination, temporal misalignment, semantic drift, unstable recall) have been extensively documented and are widely attributed to failures of training data, optimization objectives, or model scale.  This paper advances a different diagnosis: these phenomena are symptomatic of architectures that were never designed to remember.

This paper advances the position and importance of treating memory as a first-class architectural component in artificial intelligence systems, with interpretation deferred until recall.  We introduce the Dynamic Gist-Based Memory Model (DGMM), a memory-centric architecture in which experiences are stored as explicit, persistent structures grounded in time, source, and interaction, independent of task- or language-specific encoding.  By decoupling memory storage from downstream interpretation, DGMM allows recalled experience to be reinterpreted as language, context, or analytical needs evolve and do so without training.

The contributions of this work are architectural and theoretical.  First, it re-frames widely observed limitations in contemporary AI systems as consequences of short-horizon reasoning rooted in architectural design rather than model capability.  Second, it formalizes a memory-centric architectural approach in which experience is preserved independently of downstream tasks or representations.  Third, it articulates post-recall processing as a core architectural principle that allows stable memory to support evolving interpretation over time.  Accordingly, DGMM is presented not as a deployed system or algorithmic solution, but as an architectural theory intended to clarify how memory representation constrains and enables downstream behavior.   Subsequent work will examine empirical instantiations, recall strategies, and comparative evaluation across memory-intensive tasks.

This paper is the first in a series of planned contributions developing the DGMM framework.  The present work establishes the architectural foundation: the formal schema, the memory operation regime partition, and the axiomatic definitions that subsequent papers will build on directly.

\section{Background and Motivation}

Recent interdisciplinary research increasingly characterizes memory as dynamic, reconstructive, and relational, shaped by continuous interaction between episodic experience and semantic knowledge rather than by static storage of factual representations
\cite{dalessandro_genesis_2025,fayyaz_model_2022,jones_models_2015,kumar_semantic_2021,nagy_interplay_2025,spens_generative_2024,spens_hippocampo-neocortical_2025}.  These findings motivate a shift in artificial intelligence away from treating memory as an auxiliary or implicit component and toward treating it as a first-class architectural component that preserves experience over time.

This perspective reframes persistent limitations observed in contemporary artificial intelligence systems, particularly Large Language Models (LLMs).  Despite their success, LLMs exhibit recurring phenomena such as hallucination (Xu et al., 2024; J.-Y. Yao et al., 2023), temporal misalignment \cite{dhingra_time-aware_2022,luu_time_2022,nylund_time_2023,zhao_set_2024}, semantic drift  \cite{kong_dynamic_2024,yang_beyond_2025} and unstable recall \cite{duan_few-shot_2023,jiang_wikipedia-based_2017,kong_dynamic_2024,zhang_respecting_2025} While often attributed to training data quality \cite{bender_dangers_2021,tonmoy_comprehensive_2024}, optimization objectives \cite{xu_hallucination_2024}, or model scale \cite{bender_dangers_2021,brown_language_2020}, converging evidence suggests these behaviors reflect deeper architectural deficiencies in how memory is represented, persisted, and updated \cite{zhao_set_2024}.

From a memory perspective, these deficiencies arise because LLMs encode knowledge implicitly in parameters \cite{ferrario_how_2022,luo_local_2024,wu_continual_2024}, lack long-term continuity beyond bounded context windows \cite{brown_language_2020,jiang_wikipedia-based_2017,kong_dynamic_2024,zhang_survey_2025,zhang_respecting_2025}, require retraining to integrate new information \cite{lewis_retrieval-augmented_2020,wu_continual_2024,yao_tree_2023,zhao_survey_2023}, and do not maintain explicit representations of provenance, temporal grounding, or relational structure.  Retrieval-augmented and agentic memory approaches partially address grounding and continuity, but typically treat memory as an external attachment reintroduced as text or features rather than as a persistent representation of experience \cite{anokhin_arigraph_2024,bernar_exploring_2024,dakat_enhancing_2024,das_larimar_2024,gai_zhenhua_achieving_2024,gutierrez_hipporag_2024,miao_episodic_2024,rasmussen_zep_2025}.

Knowledge graphs offer explicit relational structure and improve grounding and interpretability when paired with language models \cite{lewis_retrieval-augmented_2020,peng_knowledge_2023,sahoo_systematic_2024}, yet they remain largely static and fact-centric.  Temporal extensions often treat time as metadata or embedding dynamics rather than a foundational structural element capable of supporting selective recall, perspective diversity, and semantic evolutionn \cite{biswas_knowledge_2023,cai_survey_2024,knez_event-centric_2023,liu_incremental_2022,sadeghian_chronor_2021,saxena_question_2021,sun_timelinekgqa_2025,zhang_respecting_2025}.  Collectively, these limitations motivate the need for a memory-centric architecture that preserves experience as an explicit, evolving structure rather than as implicit parameters or transient context.

The challenge of preserving prior knowledge while integrating new experience without retraining has been studied extensively in the continual learning literature under the rubric of catastrophic forgetting and the stability-plasticity tradeoff \cite{kirkpatrick_overcoming_2017,lange_continual_2021,thrun_child_1998,thrun_lifelong_1995,van_de_ven_three_2022,wang_comprehensive_2024}.  Existing approaches address this challenge primarily through parameter management (e.g. regularization, architectural expansion, or rehearsal) operating on the assumption that knowledge is encoded implicitly in weights.  DGMM responds to the same motivating problem from a different architectural premise by representing memory explicitly as a persistent structure that grows additively whereby the stability-plasticity tradeoff is addressed at the level of memory architecture rather than parameter optimization.

\section{Cognitive Foundations}
Cognitive psychology and neuroscience provide clearer guidance on the structural properties required of a memory system.  Biological memory is organized in distributed cell assemblies, or engrams, characterized by sparse global activation and dense local connectivity \cite{bullmore_economy_2012,hebb_donald_1949}.  Such organization supports scalable storage, preserves the integrity of past experiences against interference from new learning, and enables selective reinstatement.

Recall in biological systems is cue-dependent and selective, with partial reinstatement of temporal, perceptual, or contextual cues sufficient to reactivate prior memory states and support pattern completion \cite{howard_distributed_2002}.  Temporal information is preserved through mechanisms that encode order, proximity, and contextual continuity across experiences \cite{schonhaut_neural_2023,umbach_time_2020}.  Memory content further evolves as new information is integrated, producing gradual shifts in meaning rather than abrupt overwriting \cite{poo_what_2016}.  Together, these findings imply that memory architecture must support structural persistence, selective access, temporal coherence, and controlled semantic evolution conditioned on cue-driven and contextual reinstatement.

Research on episodic and semantic memory reinforces these requirements.  While early models treated semantic memory as a static repository distinct from episodic experience, contemporary accounts demonstrate that the two systems are deeply intertwined \cite{dalessandro_genesis_2025,fayyaz_model_2022,kumar_semantic_2021,nagy_interplay_2025,spens_hippocampo-neocortical_2025}.  Generative and rate-distortion models show that episodic traces store compressed, gist-like representations that are reconstructed during recall using semantic priors, producing systematic distortions that favor predictive relevance over literal accuracy \cite{fayyaz_model_2022,nagy_interplay_2025,spens_hippocampo-neocortical_2025}.  \cite{kumar_semantic_2021} specifically identifies four unresolved gaps in contemporary semantic models: they cannot track perspective diversity across contexts; they conflate experiences from different sources into a single, averaged representation; they lack mechanisms for maintaining provenance or viewpoint and they cannot represent how meaning evolves over time.  These findings motivate memory architectures that support reconstructive recall grounded in evolving semantic structure.

These architectural commitments reflect a broader philosophical position on the nature of meaning and experience.  Following Kant's insight that experience must be actively structured to be coherent, DGMM's fixed relational grammar establishes the categories through which experience can be represented, not as an arbitrary data model but as a precondition for coherent episodic memory \cite{kant_critique_1781}.  Hegel's argues that meaning develops historically through reinterpretation motivates the separation of storage from interpretation: what is preserved is not a fixed meaning but the structured trace from which meaning can be reconstructed as context and perspective evolve \cite{hegel_georg_1807}.  Foucault's account of knowledge as provenance-conditioned and irreducibly plural motivates the Source node architecture and the principle that conflicting accounts should coexist in memory rather than be reconciled at storage time \cite{foucault_archaeology_1972}.  The Derrida's concept of différance, that meaning as constitutively deferred rather than present, provides the philosophical grounding for late semantic commitment as an architectural principle: meaning is not stored in DGMM because it cannot be stored; it is produced at recall time through the differential activation of memory structure in response to a cue \cite{derrida_grammatology_1998}.

Together, these cognitive and philosophical foundations identify essential properties a memory architecture must embody: engram-like structural organization, cue-dependent recall, temporal coherence, reconstructive memory, and evolving meaning.  These principles provide the theoretical grounding for DGMM’s design as an explicit, graph-based episodic–semantic memory substrate.

\section{Architectural Design Principles}
\label{sec:architecture}
DGMM is built around explicit structural principles that prioritize interpretability, consistency, and incremental adaptation.  Memory is represented using a fixed schema of typed nodes and relations, separating conceptual substance from contextual modification.  New experiences extend the memory structure without overwriting prior representations, reflecting a stability–plasticity balance analogous to biological memory systems.

Crucially, DGMM separates memory storage from interpretation.  Stored representations capture the gist of experience rather than surface linguistic form, enabling recalled memory to be reinterpreted under new contexts without modifying the underlying structure.

\subsubsection{The Dynamic Gist-Based Memory Model (DGMM)}
\label{sec:dgmm}
This section formalizes the Dynamic Gist-Based Memory Model (DGMM), a memory-centric architecture that separates episodic storage, working memory construction, and semantic interpretation into distinct but structurally coupled components.  DGMM is characterized by a fixed relational grammar, additive real-time memory growth, and semantic meaning derived through cue-conditioned sensemaking rather than modification of stored memory.

\subsubsection{Long-Term Memory as a Fixed Relational Grammar}
\label{sec:fixedgrammer}
Long-term memory in DGMM is fundamentally episodic.  Every structure stored in memory originates in a specific experience, grounded in a particular time, source, and interaction context.  This episodic character constitutes how memory is represented.

A central limitation of contemporary AI memory systems is that they fix semantic content at the point of encoding, embedding meaning into parameters or representations that cannot be revisited from a different interpretive standpoint.  DGMM is designed to avoid this commitment.  To do so, memory must preserve experience in its specific, contextually grounded form rather than as pre-interpreted content.  A concept stored without its temporal and contextual origins has already been interpreted, reduced from specific experience to a general abstraction, and that reduction cannot be undone at recall time.  In DGMM, abstraction is therefore the work of recall and interpretation, not of storage. 

In DGMM, a gist is the structured conceptual representation of an experience, that is, its essential relational content expressed through the node and relation types of the fixed schema, independent of the surface linguistic or syntactic form in which the experience was originally encountered.  Storage operates at the level of gist rather than verbatim form, preserving what an experience means structurally while remaining neutral with respect to how it was expressed. 

Because memory is episodic and experiences accumulate over time, the memory structure is inherently dynamic.  At any point in time t, long-term memory reflects the totality of experience ingested up to that point.  DGMM represents this as a typed episodic graph whose state at time t is given by:

\[
M_t = (V_t, E_t, S)
\]
where $V_t$ is the set of nodes present at time $t$, 
$E_t \subseteq V_t \times V_t$ is the set of relations present at time $t$, 
and $S$ is a fixed schema that defines the admissible structure of memory. 
The schema $S$ is time-invariant and therefore defines what kinds of structures may exist in memory at any point, 
while $V_t$ and $E_t$ grow as DGMM ingests new experience. 
The memory state $M_t$ defines a snapshot of an evolving structure, 
and any property stated about memory is implicitly a property of $M_t$ at a given $t$.

Nodes in $V$ are instantiated from a closed set of node types $T_V$, 
each corresponding to a stable cognitive role, including \textit{Concept}, \textit{Element}, \textit{Time}, \textit{Interaction}, and \textit{Source}. 
Relations in $E$ are instantiated from a closed vocabulary of relation types $T_E$ 
(e.g., \textsc{HAS\_SUBJECT}, \textsc{HAS\_ACTION}, \textsc{HAS\_OBJECT}, \textsc{MODIFY\_SUBJECT}, \textsc{ACQUIRED\_AT}, \textsc{RECOUNTS}).

The schema $S$ further specifies type-pair admissibility constraints: 
for any ordered pair of node types $(\tau_i, \tau_j) \in T_V \times T_V$, 
only a fixed subset of relation types is permitted,
\begin{equation}
T_E^{(\tau_i, \tau_j)} \subseteq T_E
\end{equation}

enforcing a fixed relational grammar. 
This grammar encodes how experiences may be represented and prevents schema drift, 
arbitrary relation invention, or post-hoc reinterpretation of structure. 
Figure~\ref{fig:dgmm-schema} illustrates the fixed node and relation structure defined by this grammar.

\begin{figure}[ht]
  \centering
  \includegraphics[width=0.8\linewidth]{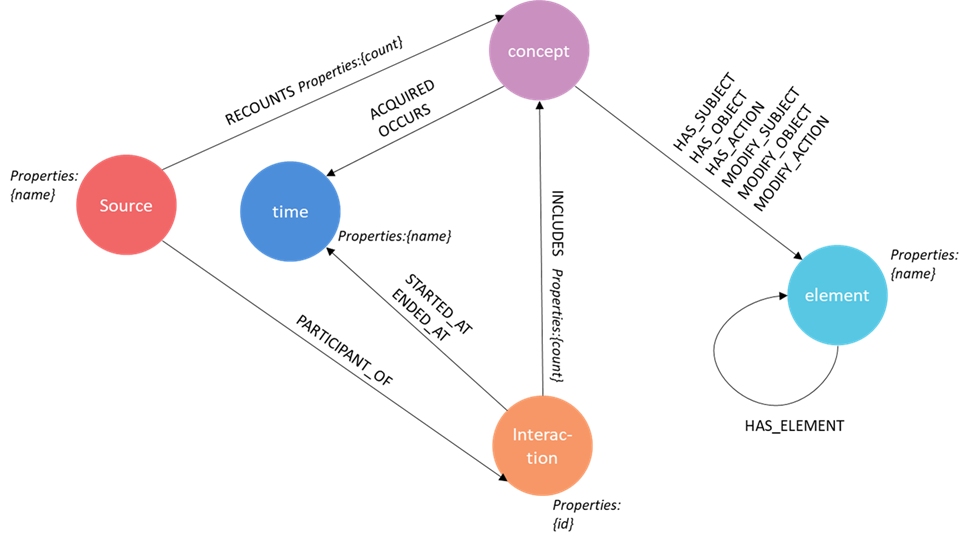}
  \caption{Fixed relational grammar of DGMM. Core node types—Concept, Element, Time, Interaction, and Source—and the admissible relations between them define how experience is represented persistently in memory while constraining how nodes may be connected.}
  \Description{DGMM schema showing node types Concept, Element, Time, Interaction, and Source with allowed relations between them.}
  \label{fig:dgmm-schema}
\end{figure}

Figure~\ref{fig:dgmm-example} illustrates the core representational structure of the Dynamic Gist-Based Memory Model (DGMM), centered on a Concept node that serves as the integrative anchor for an episodic memory instance. Surrounding this core, the model organizes information into distinct but interconnected contextual dimensions, each instantiated as nodes within the graph. On the left, a cluster of Elements captures the semantic composition of the episode, including entities (e.g., Jack, Jill), actions (e.g., fetch), objects (e.g., water), and modifiers (e.g., fresh), with labeled relationships such as \textsc{HAS\_SUBJECT} and \textsc{MODIFY\_OBJECT} representing their roles and interactions. On the right, the Time dimension situates the concept across multiple temporal references, distinguishing between the time of the original event or concept occurrence (e.g., 1/1/2000) and the time at which DGMM acquired or ingested that information into memory (e.g., 11/15/2025). The structure is grounded by a Source node, which ties the memory to its origin (e.g., \textit{Jack \& Jill Book}), and an Interaction identifier that uniquely distinguishes the memory instance. Additional contextual dimensions, such as Emotion and Space, are included to demonstrate the extensibility of the model, illustrating how new contextual layers can be incorporated without altering the underlying structural framework. Together, this graph-based representation encodes not just static facts, but the relational, temporal, and contextual dynamics that characterize DGMM’s episodic memory.

\begin{figure}[ht]
  \centering
  \includegraphics[width=0.8\linewidth]{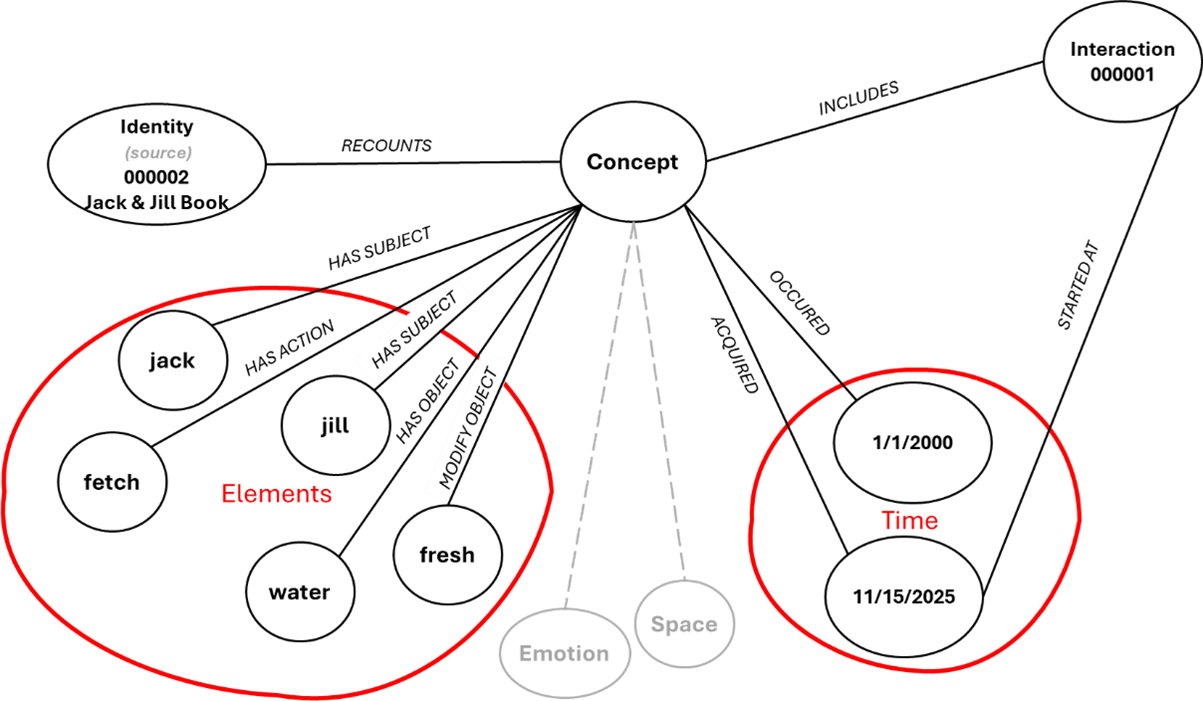}
  \caption{DGMM episodic memory instance. A Concept node anchors the memory, with associated Elements (subjects, actions, objects, modifiers), temporal references, source provenance, and interaction context forming a structured, multi-dimensional representation.}
  \Description{Example DGMM graph centered on a Concept node with Elements, Time nodes, Source, and Interaction context.}
  \label{fig:dgmm-example}
\end{figure}

\subsubsection{Memory Operation Regimes}
\label{memory-operation-regime}
DGMM distinguishes four operationally distinct regimes through which memory is formed, structured, accessed and examined.  These regimes differ in what triggers them, how they interact with the memory graph, and what invariants govern them.  Establishing this partition explicitly is essential because the guarantees available to downstream operations depend critically on which regime has acted on the memory state at any given time.

\begin{definition}[Memory Operation Regimes]
\label{def:memory-operation-regimes}
All operations on $M_t$ belong to exactly one of four regimes.

\paragraph{Ingestion.}
Ingestion is the process by which external experience is projected into the memory graph. 
Ingestion operations are strictly additive: they introduce new nodes and relations conforming to the fixed schema $S$ while leaving all existing nodes and relations unchanged. 
Ingestion is triggered by external input and is the sole mechanism through which new episodic content enters $M_t$. 
No ingestion operation may modify, retype, or remove an existing node or relation.

\paragraph{Consolidation.}
Consolidation is the process by which existing memory structure is internally reorganized. 
Consolidation operations are transformative: they may restructure, merge, or re-weight existing nodes and relations subject to two governing constraints. 
First, consolidation must be schema-preserving, in that all structures produced by consolidation must conform to $S$. 
Second, consolidation must be provenance-preserving. 
Time, Source, and Interaction nodes associated with any consolidated structure must be retained and remain accessible in the reorganized graph; however, DGMM provides an allowance for Time nodes to be generalized.

Consolidation is an internal process, decoupled from both ingestion and recall, and may be triggered by structural conditions within $M_t$ rather than by external input.

\paragraph{Recall.}
Recall is the process by which a subset of memory is selected and made available for interpretation in response to a cue. 
Recall operations are strictly read-only: they construct a working memory subgraph $W_q \subseteq M_t$ without modifying $M_t$ in any way. 
Recall is triggered by a cue $q$, and its output is transient in that working memory exists only for the duration of the interpretive operation it supports and is not written back to long-term memory.

\paragraph{Analysis.}
Analysis is the process by which recalled structure is interpreted, transformed, or evaluated to produce derived representations. 
Analytic operations take $W_q$ as input and produce outputs (e.g., embeddings, propositions, surprise signals, attribution distributions) without modifying either $M_t$ or $W_q$. 
Analysis is strictly post-recall: it presupposes a constructed working memory and has no direct access to long-term memory except through what recall has made available. 
Analytic outputs are transient unless explicitly committed to memory through a subsequent ingestion operation.

These four regimes are mutually exclusive with respect to their effects on $M_t$. 
Ingestion extends the graph, consolidation reorganizes it, recall reads from it, and analysis examines recall-induced subgraphs. 
No operation spans more than one regime.
\end{definition}

\begin{remark}[Regime Boundaries and the Series]
Each regime gives rise to distinct theoretical questions and distinct performance considerations. 
The ingestion regime and its additive growth invariant are developed formally in Section~\ref{sec:memory-ingestion}. 
The consolidation regime, its admissible policies, and the performance tradeoffs it entails are deferred for future research.  The recall regime and its axiomatic characterization are developed in Section~\ref{sec:selective-recall} and extensions are slated for development in future work.
\end{remark}

\begin{table}[ht]
\centering
\caption{DGMM Memory Operation Regimes}
\label{tab:dgmm-regimes}
\begin{tabular}{lllll}
\toprule
\textbf{Regime} & \textbf{Trigger} & \textbf{Input} & \textbf{Effect on $M_t$} & \textbf{Governing Constraint} \\
\midrule
Ingestion      & External input   & Experience & Additive only & Schema conformance \\
Consolidation  & Internal condition & $M_t$     & Transformative & Schema + provenance preservation \\
Recall         & Cue $q$          & $M_t$     & None           & Subgraph + boundedness \\
Analysis       & Post-recall      & $W_q$     & None           & Read-only on $W_q$ and $M_t$ \\
\bottomrule
\end{tabular}
\end{table}

\subsubsection{Identity and Uniqueness Constraints}
\label{sec:identity-constraint}
DGMM enforces explicit identity semantics across node types. 
Element and Time nodes are subject to name-based uniqueness and are reused across memory ingestion:
\begin{equation}
\forall v_i, v_j \in V_{\text{Elem}} \cup V_{\text{Time}}, \;
\text{name}(v_i) = \text{name}(v_j) \;\Rightarrow\; v_i = v_j
\end{equation}

These nodes function as canonical anchors for perceptual grounding and temporal alignment.

In contrast, Concept nodes are instance-specific and are not subject to name-based uniqueness. 
Distinct Concept nodes may share the same label while maintaining separate identities and relational neighborhoods, 
allowing semantic meaning to remain contextualized rather than globally collapsed.

\subsubsection{Interaction and Source Nodes}
\label{sec:interaction-source-nodes}
DGMM enforces episodic structure and provenance through Interaction and Source nodes.

Interaction nodes represent discrete episodes or sessions and group Concept nodes that participate in a single interactional context.  They establish explicit episodic boundaries, preventing the collapse of experiences into undifferentiated semantic aggregates.

Source nodes encode provenance and perspective, representing agents, systems, documents, or sensors from which information originates.  Concepts and interactions may be linked to one or more Source nodes, enabling traceability and supporting multi-source or conflicting accounts without reconciliation at storage time.

\subsection{Memory Operations}
\label{sec:memory-operations}
DGMM defines four core memory operations that describe how experience is represented, accessed, and allowed to evolve within the architecture.  These operations specify what structural transformations are projected to occur, rather than prescribing algorithms or implementation strategies.

\subsubsection{Memory Ingestion}
\label{sec:memory-ingestion}
Memory formation via ingestion in DGMM refers to the architectural process by which incoming experience is projected into persistent memory structures. 
Rather than storing raw inputs or surface linguistic forms, DGMM is designed to accept experience decomposed into gist-level conceptual structures. 
Each stored concept is explicitly associated with its constituent elements, as well as with time, source, and interaction context.

Architecturally, memory ingestion results in the addition of new nodes and relations to the memory graph, extending existing structures without overwriting prior representations. 
This ensures that persistence, provenance, and temporal grounding are properties of the memory substrate itself rather than emergent effects of training or context management.

While the schema $S$ is fixed, the memory instance $M_t$ is extensible. As new experiences occur, nodes and relations may be appended in real time, provided they conform to the fixed relational grammar and identity constraints.

Memory ingestion growth is additive rather than mutative: existing nodes and relations are not retyped, restructured, or overwritten. The additive character of ingestion is formalized as part of the architectural invariants in Section~\ref{sec:structural-persistence}, where it is shown to follow necessarily from the regime partition established in Section~\ref{sec:memory-ingestion}.

\subsubsection{Selective Recall}
\label{sec:selective-recall}
Selective recall describes the architectural mechanism by which a subset of stored memory is projected back into an active working context in response to a cue.  DGMM is designed to support recall that is conditioned on conceptual, temporal, source, or interaction-level constraints, enabling partial reinstatement of memory in lieu of exhaustive retrieval.

From a structural perspective, selective recall corresponds to the identification and extraction of a bounded subgraph whose structure reflects relevance to the cue.  This recalled subgraph functions as an interface between persistent memory and downstream processing, supporting contextual continuity while preserving separation between storage and interpretation.

\begin{definition}[Working Memory Construction (Recall)]
\label{def:working-memory-construction}
Given a cue $q$ and memory state $M_t$, DGMM constructs a working memory state $W_q$ defined as:
\begin{equation}
W_q = \psi(q, M_t), \quad W_q \subseteq M_t
\end{equation}
\end{definition}

The operator $\psi$ is the recall operator. Its formal properties are established in Section~\ref{sec:admissible-recall-operators} (Definition~\ref{def:admissible-recall-operator}). 
Working memory $W_q$ is transient and therefore recomputed per cue, functioning as a bounded cognitive workspace that maintains separation between storage and interpretation, protecting long-term memory from interference.

\paragraph{Admissible Recall Operators}
\label{sec:admissible-recall-operators}
The recall operator $\psi$ is not prescribed algorithmically in this work. Specific recall strategies, their structural properties, and their performance characteristics under varying memory conditions are developed in a companion paper. The role of this section is to establish the axiomatic constraints that any compliant recall operator must satisfy, defining the architectural contract within which recall strategies operate and providing the formal foundation from which the companion paper proceeds.

\begin{definition}[Admissible Recall Operator]
\label{def:admissible-recall-operator}
An operator $\psi : Q \times M_t \to W_q$ is admissible under DGMM if and only if it satisfies the following axioms:

\begin{enumerate}[label=\textbf{R\arabic*:}, leftmargin=*]

\item \textbf{Subgraph.}
$W_q \subseteq M_t$ for all $q$ and $t$. Recall constructs a subgraph of long-term memory and does not introduce structure absent from $M_t$.

\item \textbf{Non-triviality.}
For any non-empty $M_t$ and cue $q$ that is present in memory, for which at least one node or subgraph in $M_t$ satisfies the cue condition, $W_q \neq \emptyset$. Recall returns at least some content when the cue has a structural correlate in long-term memory.

\item \textbf{Cue-sensitivity.}
There exist cues $q_1 \neq q_2$ such that
\[
\psi(q_1, M_t) \neq \psi(q_2, M_t).
\]
Recall is genuinely conditioned on the cue rather than returning a fixed structure independent of it.

\item \textbf{Recall Boundedness.}
$|W_q| < |M_t|$ in general. Working memory is a proper subset of long-term memory; recall is selective rather than exhaustive.

\item \textbf{Read-only Recall.}
$M_t$ is unchanged by the application of $\psi$. Recall operates within the read-only regime established in Section~\ref{sec:memory-operations} and does not modify long-term memory.

\end{enumerate}
\end{definition}

These axioms define the space of compliant recall functions rather than a unique function. Multiple recall strategies may satisfy these constraints while differing substantially in what they retrieve, how efficiently they operate, and what performance characteristics they exhibit under varying memory conditions. This multiplicity is intentional: DGMM is an architectural theory, and the choice among admissible recall strategies is a design and performance question addressed in the companion paper.

\begin{remark}[Recall Classes]
The fixed relational grammar of DGMM naturally supports structurally distinct families of admissible recall, defined by cue conditions. Four illustrative classes are:

\begin{itemize}

\item \textbf{Element-conditioned recall.}
$W_q$ is constructed by identifying Concept nodes sharing Element nodes with the cue representation. 
This supports associative retrieval grounded in shared perceptual or conceptual content.

\item \textbf{Temporally-scoped recall.}
$W_q$ is constrained to Concept nodes grounded within a specified time window relative to the cue. 
This supports episodic retrieval organized by temporal proximity.

\item \textbf{Source-conditioned recall.}
$W_q$ is restricted to Concept nodes associated with a specified Source node. 
This supports provenance-sensitive retrieval and enables multi-perspective analysis by selectively engaging memory from particular origins.

\item \textbf{Interaction-scoped recall.}
$W_q$ is restricted to Concept nodes associated with a specified Interaction node. 
This supports episode-sensitive retrieval and enables multi-interaction analysis by selectively engaging memory from particular interactions.

\end{itemize}
\end{remark}

These classes are not mutually exclusive. Composite recall strategies may condition simultaneously on element, temporal, and source structure, and alternative conditioning criteria derivable from the relational grammar, such as interaction-scoped recall bounded to a particular episode, are equally admissible. The classes above are offered as illustrations of the structural diversity admitted by the axioms and as a starting taxonomy for the companion paper's development of recall strategies.

\begin{remark}[Ordering over Recall Outcomes]
Let $\psi_1$ and $\psi_2$ be admissible recall operators in the sense of Definition~\ref{def:admissible-recall-operator}. For a fixed cue $q$ and memory state $M_t$, different admissible recall operators will in general return different subgraphs. A natural containment ordering exists over these outcomes:
\[
\psi_1 \leq \psi_2
\iff
\psi_1(q, M_t) \subseteq \psi_2(q, M_t)
\quad \text{for all } q \text{ and } t.
\]

This ordering admits a minimal recall operator returning only the most directly cue-relevant structure and approaches a maximal bound as recall expands to all nodes reachable from the cue within $M_t$. Working memory can be understood as any recalled subgraph situated within this ordering, with the boundedness axiom of Definition~\ref{def:admissible-recall-operator} ensuring that the maximal bound is never reached in practice. The lattice-theoretic properties of this ordering, including conditions under which minimal and maximal operators are well-defined, are developed in the recall companion paper.
\end{remark}

\subsubsection{Semantic Projection Analysis}
\label{sec:semntic-projection-analysis}
The operations described in this section belong to the analysis regime established in Definition~\ref{def:memory-operation-regimes}: they take $W_q$ as input, produce derived representations, and leave both $M_t$ and $W_q$ unchanged. 
Semantic meaning in DGMM is thus constructed by projecting the working memory subgraph into a semantic space:
\begin{equation}
Z_q = \Phi(W_q)
\end{equation}
where $\Phi$ is an embedding operator applied exclusively to $W_q$. This ensures that semantic representations are inherently cue- and context-dependent.

Analytic operators are then applied to $Z_q$ to derive higher-order properties, including similarity and relatedness, clustering for thematic coherence, importance based on structural contribution, and surprise or novelty as deviation from expected patterns. 
All analytic operations are read-only with respect to $M_t$ and $W_q$.

\begin{definition}[Admissible Embedding Operator]
\label{def:admissible-embedding-operator}
A function $\Phi_q : W_q \to Z_q$ is an admissible embedding operator under DGMM if and only if it satisfies the following constraints:

\begin{enumerate}[label=\textbf{E\arabic*:}, leftmargin=*]

\item \textbf{Post-recall domain.}
$\Phi_q$ takes $W_q$ as its sole input and has no direct access to $M_t$. Embeddings are computed exclusively from recalled structure.

\item \textbf{Read-only.}
$\Phi_q$ does not modify $M_t$ or $W_q$ during the embedding computation. The operation is strictly observational with respect to both long-term memory and working memory.

\item \textbf{Transience.}
The output $Z_q$ is transient and is not written back to $M_t$ unless explicitly committed through a subsequent ingestion operation.

\end{enumerate}
\end{definition}

These axioms define the space of compliant embedding functions rather than a unique function. 
The selection among admissible embedding operators, including whether to use graph neural networks, language model encoders, or other mechanisms, and the consequences of that selection for proposition generation and divergence measurement, are developed in the companion papers.

\subsubsection{Consolidation}
\label{sec:consolidation}
Consistent with the consolidation regime of Definition~\ref{def:memory-operation-regimes}, structural consolidation is a transformative operation governed by schema conformance and provenance preservation.  The following describes one admissible policy based on structural equivalence of Concept nodes.

Under the structural equivalence policy, Concept nodes are defined by their associated Element nodes. 
For a Concept node $v \in V_{\text{Concept}}$, let
\begin{equation}
E(v) = \{ e \in V_{\text{Elem}} \mid (v,e) \in E \}
\end{equation}
denote the set of Elements linked to $v$. 

Two Concept nodes $v_i$ and $v_j$ are structurally equivalent under this policy if:
\begin{equation}
E(v_i) = E(v_j).
\end{equation}

When structural equivalence is detected, the nodes $v_i$ and $v_j$ may be consolidated into a single Concept node $v^*$. 
Consolidation aggregates evidence rather than discarding it: Concept–Element relations are merged by summing their associated weights,
\begin{equation}
w(v^*, e) = w(v_i, e) + w(v_j, e) \quad \forall e \in E(v_i).
\end{equation}

Provenance preservation, as required by Definition~\ref{def:memory-operation-regimes}, is satisfied by reattaching all Time, Source, and Interaction nodes associated with $v_i$ and $v_j$ to the consolidated node $v^*$, unless redundant under existing uniqueness constraints. 
As a result, multiple temporal occurrences, provenance sources, and interaction contexts remain explicitly represented in the consolidated structure.

Consolidation is treated as an offline or background process, decoupled from cue-driven recall and working memory construction. 
Recall operations neither trigger consolidation nor depend on it. 
Alternative consolidation criteria such as approximate structural similarity, temporal proximity, or source-sensitive policies are equally admissible within the DGMM architecture.

All admissible consolidation policies, regardless of the equivalence criterion used, are governed by the schema conformance and provenance preservation constraints of Definition~\ref{def:memory-operation-regimes}. 
The development of admissible consolidation policies, their performance tradeoffs, and formal analysis of provenance preservation under structural reorganization are developed in the consolidation companion paper.

\section{Theoretical Capabilities and Evaluation Pathways}
\label{sec:theoretical-capabilities}
This section characterizes the theoretical capabilities afforded by the Dynamic Gist-Based Memory Model (DGMM) and identifies corresponding pathways for evaluation.  While algorithms, learning rules, and performance metrics are essential components of DGMM, the scope here is to establish which properties are well-defined under the DGMM representation and therefore amenable to principled analysis.

Throughout this section, a deliberate distinction is maintained between architectural capability and empirical validation.  The claims made here concern what DGMM makes observable by design and therefore any questions of optimization, efficiency, or task-level effectiveness are explicitly out of scope and deferred to future work.

\subsection{Structural Persistence and Recall Dynamics}
\label{sec:structural-persistence-recall}
DGMM represents memory as an explicit relational structure whose elements retain identity across time. 
Memory is not reduced to transient internal activations or implicit parameter states; instead, it is maintained as a persistent graph of concepts and relations that may be revisited, compared, and extended.

Recall in DGMM is modeled as a context-sensitive activation process over this structure. 
Given a cue $q$, recall induces a subgraph
\begin{equation}
R_t(q) \subseteq M_t,
\end{equation}
representing the subset of memory elements and relations that become relevant under that cue at time $t$. 
Because recall operates over explicit structure, the outcome of recall can be examined independently of downstream realization.

This formulation does not assume a fixed recall strategy. 
Different recall processes may engage different portions of memory under the same cue, and such variation is treated as an admissible property of the system rather than as an error condition. 

DGMM’s contribution at this level is to make the result of recall structurally explicit.

\subsection{Context Variability and Late Commitment}
\label{sec:context-variability-late-commitment}
A central design goal of DGMM is to support contextual variability without requiring premature semantic commitment.  Stable entities and relations persist in memory, while their interpretation and relevance remain contingent on the recall context.  As a result, the same underlying memory structure may support multiple perspectives or query intents without enforcing a single, globally fixed meaning.

From an evaluation standpoint, variation in recall across cues is therefore expected.  Differences in recalled structure do not necessarily indicate inconsistency or instability; they may instead reflect legitimate contextual shifts or alternative interpretive frames.  DGMM leverages cue-based contextual processing to reveal alternative interpretations, establishing a representation that supports the observation and analysis of interpretive variability.

\subsection{Surprise as Structural Deviation}
\label{sec:surrise-structural-deviation}
Surprise motivates learning and reevaluation by reflecting changes in knowledge acquisition and memory growth over time, revealing nuances, alternative associations, and emerging directions within the evolving memory structure. 
DGMM treats surprise as a property of structural deviation in recall instead of output-level anomaly, prediction error, or reward signal. 
Because memory is represented explicitly as a relational structure, changes in recall organization over time—such as the appearance of novel associations or the absence of previously typical patterns—can be identified at the level of memory engagement itself.

\begin{definition}[Cue-Conditioned Structural Surprise]
\label{def:cue-conditioned-structural-surprise}
Let $\Delta$ be a structural divergence operator satisfying the admissibility constraints of Definition~\ref{def:admissible-structural-divergence-operator}. 
Let $R_{t_1}(q) \subseteq M_{t_1}$ and $R_{t_2}(q) \subseteq M_{t_2}$ denote recall-induced subgraphs constructed under cue $q$ at times $t_1 < t_2$, respectively. 
Cue-conditioned structural surprise is defined as
\begin{equation}
S(q; t_1, t_2) = \Delta\bigl(R_{t_2}(q), R_{t_1}(q)\bigr).
\end{equation}
\end{definition}

Surprise is a forward-looking measure of structural divergence between recall-induced subgraphs under the same cue across time. 
It is defined relative to the history of recall under that cue, not relative to the global state of memory or to recall under different cues. 
The formal properties that any admissible $\Delta$ must satisfy are established in Definition~\ref{def:admissible-structural-divergence-operator}.

\begin{definition}[Admissible Structural Divergence Operator]
\label{def:admissible-structural-divergence-operator}
A function $\Delta$ is an admissible structural divergence operator for DGMM if it satisfies the following constraints:

\begin{enumerate}[label=\textbf{D\arabic*:}, leftmargin=*]

\item \textbf{Non-negativity.}
\[
\Delta\bigl(R_{t_1}(q), R_{t_2}(q)\bigr) \ge 0
\quad \text{for all admissible inputs.}
\]

\item \textbf{Zero self-divergence.}
\[
\Delta\bigl(R_{t_1}(q), R_{t_2}(q)\bigr) = 0
\quad \text{if } R_{t_1}(q) = R_{t_2}(q).
\]
Structural identity implies zero surprise.

\item \textbf{Domain restriction.}
$\Delta$ takes as input pairs of recall-induced subgraphs $R_{t_1}(q)$ and $R_{t_2}(q)$ such that
\[
R_{t_1}(q) \subseteq M_{t_1}, \quad
R_{t_2}(q) \subseteq M_{t_2}, \quad
t_1 < t_2,
\]
and both subgraphs are constructed under the same cue $q$. 
$\Delta$ is not defined over subgraphs constructed under different cues or over arbitrary graph pairs.

\item \textbf{Significance threshold.}
$\Delta$ is calibrated to a notion of structural significance appropriate to the memory context in which it operates. 
Trivial or incidental differences between recalled subgraphs need not produce positive divergence. 
The determination of what constitutes a significant structural shift is a topic for future work.

\end{enumerate}
\end{definition}

This definition makes no assumptions about how divergence is quantified or acted upon. 
It establishes only that surprise is well-defined as structural deviation between recall-induced subgraphs under the same cue across time.

\begin{remark}[Non-triviality of Locality]
By the additive growth invariant of the ingestion regime (Proposition~1), $M_{t_2}$ may be substantially larger than $M_{t_1}$, as a significant volume of new experience may have been ingested between $t_1$ and $t_2$. 
Without domain restriction (D3), a divergence operator defined over the full memory graph could be affected by this accumulated but unrecalled structure, producing surprise signals that reflect memory growth rather than genuine change in recall engagement. 

Proposition~3 closes this door: surprise is invariant under additions to $M_{t_2}$ that are not engaged by recall under $q$. 
A system ingesting continuously will therefore not generate surprise as a byproduct of memory accumulation. 
Surprise arises only when recall under a given cue engages memory differently than before. 
Locality establishes where surprise can arise—within recalled structure only. 
A companion architectural commitment governs how much structural change is required to register as surprise.
\end{remark}

Surprise in DGMM is not designed to respond to every structural difference between recalled subgraphs. 
It is designed to capture significant shifts, i.e., changes in recall organization that are meaningful with respect to the cue and the memory context. 
Calibrating $\Delta$ to significant structural shifts rather than fine-grained differences is an architectural commitment: a surprise signal that responds to every minor variation would not be a reliable indicator of genuine change in memory engagement. 
The formal characterization of what constitutes a significant shift, and the development of concrete divergence measures calibrated to different notions of significance, are developed in the surprise companion paper. 
This calibration is what makes surprise a structurally grounded signal of meaningful change rather than a noise-sensitive metric.

\subsubsection{Instantiations of the Structural Divergence Operator $\Delta$}
\label{sec:instantiations-structural-divergence-operator}

The definition of surprise above is intentionally abstract. 
DGMM commits to the existence of structural observables, not to a particular metric.

\paragraph{Local Structural Divergence.}
Let $N_k(v, t \mid q)$ denote the $k$-hop neighborhood of node $v$ within $R_t(q)$. 
Local divergence may be expressed as
\begin{equation}
\Delta_{\text{nbr}}(v; t_1, t_2 \mid q)
= 1 - \frac{\left| N_k(v, t_1 \mid q) \cap N_k(v, t_2 \mid q) \right|}
{\left| N_k(v, t_1 \mid q) \cup N_k(v, t_2 \mid q) \right|}.
\end{equation}

\paragraph{Embedding-Based Divergence (Post-Recall Interpretation).}
Embeddings in DGMM are computed after recall as query-conditioned projections of recalled structure. 
Let
\begin{equation}
\Phi_q : R_t(q) \rightarrow Z_t(q)
\end{equation}
produce transient embeddings $z_v(t \mid q)$. 
Representational divergence may be expressed as
\begin{equation}
\Delta_{\text{emb}}(v; t_1, t_2 \mid q)
= \left\lVert z_v(t_2 \mid q) - z_v(t_1 \mid q) \right\rVert.
\end{equation}

Embedding divergence reflects changes in how recalled structure is interpreted under a given query context. 
Such divergence may indicate structural reorganization in recall, shifts in contextual emphasis, or the emergence of new semantic nuance or meaning as interpretation evolves.

In DGMM, surprise is a direct manifestation of how recall structure deviates under comparable cues in a persistent memory substrate.

\subsection{Attribution, Source Sensitivity, and Structural Drift}
\label{sec:attribution-source-sensitivity-structural-drift}

Provenance is essential in memory-centric systems because it grounds recall in traceable contributions. 
DGMM represents provenance explicitly as part of memory structure. 
Sources are modeled as first-class nodes connected to the concepts to which each source contributes.

\begin{definition}[Provenance Observables]
\label{def:provenance-obervables}
Let $O \subseteq V_t$ denote the set of source nodes. 
Unless otherwise noted, this section considers the pre-consolidation regime, in which each concept node is associated with a single source node. 
This assumption simplifies interpretation but does not restrict generality.

For a recalled concept node $v \in R_t(q)$, the source attribution set is defined as
\begin{equation}
\mathrm{src}(v) = \{\, o \in O \mid (o,v) \in E_t \cap R_t(q) \,\}.
\end{equation}

Cue-conditioned source mass for source $o$ is defined as
\begin{equation}
m_o(q,t) = \sum_{v \in R_t(q)} w(v)\, \mathbf{1}[\, o \in \mathrm{src}(v) \,],
\end{equation}
where $\mathbf{1}[\cdot]$ is the indicator function.

The normalized source distribution is defined as
\begin{equation}
p_o(q,t) = \frac{m_o(q,t)}{\sum_{o' \in O} m_{o'}(q,t)}.
\end{equation}
\end{definition}

Changes in these distributions across time indicate shifts in provenance composition under a fixed cue. 
Such changes may arise from memory evolution, contextual variation, or recall strategy changes, making the resulting DGMM structure observable.

\subsection{Interpretability, Auditability, and Governance-Oriented Observables}
\label{sec:interpretability}

In this work, interpretability is grounded in the structural inspectability of recalled memory, reflecting explicit representations of memory elements, their relationships, and their engagement during recall. 
DGMM supports interpretability and auditability as consequences of explicit memory structure. 
For a cue $q$ at time $t$, the recalled subgraph
\begin{equation}
R_t(q)
\end{equation}
constitutes an inspectable artifact identifying which concepts and relations were engaged.

Interpretability here is structural rather than procedural. 
DGMM provides access to the structural footprint of recall, allowing inspection and comparison of recalled memory across time:
\begin{equation}
D_R(q; t_1, t_2) = \Delta\bigl(R_{t_1}(q), R_{t_2}(q)\bigr).
\end{equation}

Differences in recalled structure may reflect changes in memory, context, or recall strategy. 
While different recall strategies may yield different recalled subgraphs under the same cue, DGMM treats such variation as admissible; the recalled structures remain comparable as artifacts of memory engagement, independent of the processes that produced them. 
Regardless of cause, the recalled subgraphs remain comparable as structural objects.

Governance constraints may be expressed as predicates over recalled structure. 
Let $\Pi$ denote a set of such constraints. 
The constraint satisfaction signature is
\begin{equation}
\sigma_{\Pi}(q, t) = \{\, \pi \in \Pi \mid R_t(q) \models \pi \,\}.
\end{equation}

This supports inspection and auditing of recall outcomes without assumptions about execution mechanics.

\subsection{Proposition and Gist Generation}
\label{sec:proposition-generation}

Propositions are essential for thought simulation, ideation, and natural conversational interaction, enabling agents to articulate and explore interpretations without committing them to persistent memory. 
DGMM generates propositions, or gist-level interpretations, dynamically at recall time. 
Propositions are not stored as persistent memory entities; they are constructed by interpreting recalled structure in context.

Let
\begin{equation}
W_t(q) = \Psi\bigl(R_t(q)\bigr)
\end{equation}
denote a recall-scoped abstraction of salient relational patterns, and let $Z_t(q)$ denote post-recall embeddings. 
Propositions are generated as
\begin{equation}
P_t(q) = \rho\bigl(W_t(q), Z_t(q)\bigr).
\end{equation}

Because propositions are cue- and context-dependent, semantic evolution in DGMM occurs at the level of interpretation rather than storage. 
Different cues or perspectives may yield different propositions from the same underlying memory without requiring modification of stored structure.

\subsection{Evaluation Pathways}
\label{sec:evaluation}

These capabilities are not introduced as optional features but arise necessarily from DGMM’s representational commitments: once memory is explicit, persistent, and recall-conditioned, structural deviation, provenance-sensitive drift, and context-dependent interpretation become unavoidable and therefore analyzable properties of the system.

The capabilities described in this section define what DGMM is architecturally designed to enable. 
They suggest natural directions for empirical evaluation (e.g., analysis of recall structure, provenance composition, or interpretive stability) but do not themselves constitute empirical claims. 
The design and execution of such evaluations are deferred to future work.

\subsection{Architectural Invariants}
\label{sec:invariants}

This section formalizes a set of architectural invariants that follow directly from DGMM's commitments as established in Definition~\ref{def:memory-operation-regimes} (Memory Operation Regimes): the additive constraint on ingestion, the schema and provenance constraints on consolidation, the read-only constraint on recall, and the post-recall constraint on analysis.

\subsubsection{Structural Persistence under Additive Memory Growth}
\label{sec:structural-persistence}

DGMM represents long-term memory as a persistent relational structure that grows additively over time through the process of ingestion. 
Memory formation introduces new nodes and relations but does not overwrite, retype, or remove existing structures. 
This commitment yields an immediate structural invariant.

\begin{proposition}[Episodic Persistence under Additive Growth]
Let $M_t = (V_t, E_t)$ denote the DGMM memory graph at time $t$. 
If memory ingestion growth is additive, then for any $t_1 < t_2$,
\begin{equation}
V_{t_1} \subseteq V_{t_2}
\quad \text{and} \quad
E_{t_1} \subseteq E_{t_2}.
\end{equation}

Consequently, the subgraph induced by any interaction ingested at time $t_1$ remains structurally intact at all later times.
\end{proposition}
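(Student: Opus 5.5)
The plan is to argue by induction over the finite sequence of operations that act on the memory graph between $t_1$ and $t_2$. Definition~\ref{def:memory-operation-regimes} places every such operation in exactly one of the four regimes. So I would write the evolution from $M_{t_1}$ to $M_{t_2}$ as a composition $M_{t_2} = o_k \circ \cdots \circ o_1 (M_{t_1})$ of regime-typed operations. It then suffices to show that each single operation $o$ satisfies $V \subseteq V'$ and $E \subseteq E'$, where $(V',E') = o(V,E)$. Transitivity of $\subseteq$ then gives $V_{t_1} \subseteq V_{t_2}$ and $E_{t_1} \subseteq E_{t_2}$. If time is taken as continuous, I would simply index the memory states by the (countable, ordered) operation events, since $M_t$ changes only when an operation acts.

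The single-step cases are read off directly from the regime definitions:
\begin{itemize}
\item \emph{Ingestion.} It only introduces new nodes and relations and may not modify, retype, or remove existing ones, so $V \subseteq V'$ and $E \subseteq E'$.
\item \emph{Recall.} It is read-only on $M_t$ (axiom R5 of Definition~\ref{def:admissible-recall-operator}), so $o$ is the identity on $M_t$.
\item \emph{Analysis.} It is read-only with respect to $M_t$ and $W_q$ (constraint E2 of Definition~\ref{def:admissible-embedding-operator}, and the regime definition), so it is also the identity on $M_t$. Its outputs reach $M_t$ only via a later ingestion step, which is already covered by the ingestion case.
\end{itemize}

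The main obstacle is consolidation. It is explicitly transformative: the structural-equivalence policy of Section~\ref{sec:consolidation} replaces $v_i, v_j$ by $v^*$, which breaks $V_{t_1} \subseteq V_{t_2}$ at the level of node identity. I would resolve this by reading the hypothesis ``memory ingestion growth is additive'' as the statement that memory evolves under the additive regime on $[t_1,t_2]$. In that reading only ingestion, recall, and analysis act in the interval, which matches the pre-consolidation regime already invoked in Definition~\ref{def:provenance-obervables}. I would state this scoping explicitly in the proof. I would also remark that across a consolidation step one retains only the weaker provenance-preservation guarantee, not literal containment. An alternative would be to define containment up to the consolidation merge map, but I would not attempt that here.

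For the ``consequently'' clause, let $I$ be an Interaction node ingested at $t_1$. Let $U \subseteq V_{t_1}$ be the set of nodes of its induced subgraph (the Concept nodes grouped by $I$, together with their Elements, Time, and Source nodes), and let $E_{t_1}[U] = E_{t_1} \cap (U \times U)$. By the main claim, $U \subseteq V_{t_2}$ and $E_{t_1}[U] \subseteq E_{t_2} \cap (U \times U)$. Because ingestion never retypes nodes or relations, the type labels from $T_V$ and $T_E$ are unchanged. So the original induced subgraph embeds, with identical typing, into the induced subgraph on $U$ at $t_2$. I would define ``structurally intact'' as exactly this embedding, noting that later ingestion may add edges incident to $U$ but never deletes or retypes any edge of the original subgraph.
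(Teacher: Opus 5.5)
Your argument is correct. Its core step is the same as the paper's: ingestion only appends schema-conforming nodes and relations and never deletes, retypes, or modifies existing ones, so the node and edge sets grow monotonically. The paper's proof stops at that observation and never mentions the other three regimes, tacitly treating ingestion as the only operation that changes $M_t$.

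Your route is more careful in two ways. First, you induct over the regime-typed operations of Definition~\ref{def:memory-operation-regimes} and treat recall (R5) and analysis (E2 plus the regime definition) as identities on $M_t$. That is what an argument ``from the regime partition'' actually needs.

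Second, you notice that consolidation genuinely breaks literal containment. Merging $v_i, v_j$ into a single $v^*$ (Section~\ref{sec:consolidation}) removes at least one Concept node present at $t_1$, together with its incident edges. Admissible Time generalization can also replace edges that existed at $t_1$. Because ingestion is additive by definition, the proposition's hypothesis as literally written is always satisfied. So the conclusion holds only under the restriction you state explicitly, that no consolidation acts on $[t_1,t_2]$. Across a consolidation step, the correct fallback is the weaker guarantee of Proposition~2, as you say.

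The paper's version buys brevity. Yours buys three things: a statement that is true as stated, a clean separation of the identity-like regimes from the transformative one, and a precise reading of ``structurally intact'' as a type-preserving embedding of the induced subgraph on $U$. The paper only asserts that last point informally.
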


\begin{proof}
By architectural constraint, memory formation in DGMM appends new nodes and relations that conform to the fixed relational grammar while leaving existing structures unchanged. 
Additionally, Memory ingestions does not permit deletion or modification of prior nodes or edges. 
Therefore, memory ingestion growth is monotonic with respect to both nodes and relations, and all previously stored episodic subgraphs persist.
\end{proof}

This proposition formalizes a key architectural distinction between DGMM and systems in which memory is encoded implicitly in mutable parameters or periodically re-optimized structures. 
In DGMM, loss of access to prior experience can only occur through recall selection, not through structural erasure.

\subsubsection{Provenance Preservation under Consolidation}
\label{sec:provenance-preservation}
Consolidation reorganizes memory structure without adding new experience from outside the system. 
Unlike ingestion, which extends the graph additively, consolidation may merge, restructure, or re-weight existing nodes. 
This raises a question that ingestion does not: when existing structure is reorganized, is anything lost? 
For memory that is to be trusted as a substrate for attribution analysis, governance, and provenance-sensitive recall, the answer must be no. 
DGMM's provenance preservation constraint, established in Definition~\ref{def:memory-operation-regimes}, ensures that consolidation is structurally transformative but informationally conservative; that is, the episodic record of where knowledge came from, when it was acquired, and in what context it arose is never silently discarded, regardless of how the Concept node structure above it is reorganized. 
The one qualified exception concerns Time nodes, which may be generalized under the conditions stated in Proposition~2.

\begin{proposition}[Provenance Preservation under Consolidation]
Let $v_i, v_j \in V_{\text{Concept}}$ be Concept nodes consolidated into $v^*$ under any admissible consolidation policy. 
Let $P(v)$ denote the set of Time, Source, and Interaction nodes associated with a Concept node $v$. 
Then for any admissible consolidation:

\paragraph{(i) Source and Interaction Preservation.}
All Source and Interaction nodes associated with $v_i$ and $v_j$ are preserved and remain accessible through $v^*$:
\begin{equation}
\{\, o \in P(v_i) \cup P(v_j) \mid o \in V_{\text{Source}} \cup V_{\text{Int}} \,\}
\subseteq P(v^*).
\end{equation}

\paragraph{(ii) Time Preservation or Admissible Generalization.}
Time nodes associated with $v_i$ and $v_j$ are either preserved or replaced by an admissible generalization. 
A Time node generalization is admissible if and only if the generalized node is temporally consistent with the original nodes it replaces; that is, the generalized temporal reference covers the period of the original nodes without misrepresenting when the associated experience occurred.
\end{proposition}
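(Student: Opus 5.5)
The proof proceeds by unfolding the definition of an admissible consolidation operation in Definition~\ref{def:memory-operation-regimes} and checking each clause against its two governing constraints. Consolidation is schema-preserving and provenance-preserving. The provenance clause says that Time, Source, and Interaction nodes associated with any consolidated structure must be retained and remain accessible in the reorganized graph, with the single allowance that Time nodes may be generalized. The statement is thus a near-direct transcription of that constraint into the notation $P(\cdot)$. The work lies mainly in making precise what ``retained and remain accessible through $v^*$'' means in graph terms.

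For part (i), fix an admissible consolidation mapping $v_i, v_j$ to $v^*$, and let $o \in P(v_i) \cup P(v_j)$ with $o \in V_{\text{Source}} \cup V_{\text{Int}}$. The argument has three steps.

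\begin{enumerate}
\item Provenance preservation requires $o$ to survive in the post-consolidation node set. This rules out deletion of $o$.
\item Since $o$ is still present, and consolidation must keep it accessible from the structure that replaced $v_i$ and $v_j$, namely $v^*$, there must be an edge of admissible type between $o$ and $v^*$. Schema preservation guarantees the edge has such a type, for example \textsc{RECOUNTS} or its Interaction analogue, because the type-pair admissibility sets $T_E^{(\tau_i,\tau_j)}$ already contain the relation that linked $o$ to $v_i$ or $v_j$.
\item Hence $o \in P(v^*)$.
\end{enumerate}

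The structural-equivalence policy of Section~\ref{sec:consolidation} illustrates this: it reattaches all such nodes to $v^*$. The phrase ``unless redundant under existing uniqueness constraints'' does not break the inclusion. Redundancy means that a node of the same identity is already attached, so membership in $P(v^*)$ still holds.

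For part (ii), let $\tau \in P(v_i)\cup P(v_j)$ be a Time node. The provenance constraint gives two options: $\tau$ is retained, in which case the argument of part (i) applies verbatim, or $\tau$ is generalized. In the generalized case the claim is that the replacement is admissible exactly when it is temporally consistent. The ``if and only if'' here functions as a definition of admissibility for generalizations, not as something derived from deeper axioms. The honest move is to state this explicitly. One could model each Time node as an interval $I(\tau)$ and require that the generalized node $\tau'$ satisfies $\bigcup I(\tau) \subseteq I(\tau')$ while preserving the distinction between occurrence and acquisition time. This would give ``covers the period without misrepresenting'' a checkable form. The claim then follows because the only consolidation outcomes the regime permits for Time nodes are retention and this admissible generalization.

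The main obstacle is that Definition~\ref{def:memory-operation-regimes} states provenance preservation informally (``retained and remain accessible''). It does not say \emph{accessible through $v^*$} specifically, nor does it define what an admissible Time generalization is. My first attempt will be to fix these readings explicitly as the formal content of the provenance constraint. I will interpret accessibility as adjacency to the merged node and admissibility as interval coverage, and then note that under these readings both parts are immediate. The alternative would be to derive them from schema conformance alone, which I expect to fail, since schema conformance constrains edge types but not edge existence.
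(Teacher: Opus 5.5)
Your proposal is correct and follows the paper's route. Both reduce the proposition to the provenance-preservation clause of Definition~\ref{def:memory-operation-regimes}: any policy that drops Source or Interaction nodes, or generalizes Time nodes inconsistently, is inadmissible by definition, and the ``if and only if'' in (ii) serves as a definition rather than a derived fact. You are more careful than the paper, which asserts the inclusion into $P(v^*)$ is ``satisfied by construction'' even though the definition only requires accessibility ``in the reorganized graph''; your explicit adjacency and interval-coverage readings close that step.
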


\begin{proof}
By Definition~\ref{def:memory-operation-regimes}, all consolidation operations must be provenance-preserving: Source and Interaction nodes associated with any consolidated structure must be retained and remain accessible in the reorganized graph without exception. 
This is satisfied by construction for any admissible policy. 

Time nodes are governed by the same requirement subject to the generalization allowance stated in Definition~\ref{def:memory-operation-regimes}; generalization is permissible only when the resulting temporal reference is consistent with the original. 
Any consolidation policy that eliminates Source or Interaction nodes, or that replaces Time nodes with temporally inconsistent generalizations, is inadmissible under Definition~\ref{def:memory-operation-regimes} and therefore outside the scope of this proposition.
\end{proof}

\subsubsection{Locality of Cue-Conditioned Structural Surprise}
\label{sec:locality-cue-conditioned-surprise}

DGMM defines surprise as deviation in recall structure under comparable cues; therefore, since recall operates by constructing a bounded subgraph of long-term memory, surprise is necessarily localized to the recalled structure.

\begin{proposition}[Locality of Structural Surprise]
Let $R_t(q) \subseteq M_t$ denote the recall-induced subgraph constructed under cue $q$ at time $t$. 
Cue-conditioned structural surprise is invariant under changes to the remainder of memory $M_t \setminus R_t(q)$. 
That is, additions to long-term memory that are not engaged by recall under $q$ do not contribute to surprise.
\end{proposition}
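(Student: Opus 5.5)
The argument is essentially a factorization: surprise is computed from recalled subgraphs only, so it depends on the rest of memory only through recall itself. I will write $S(q;t_1,t_2)=\Delta\bigl(R_{t_2}(q),R_{t_1}(q)\bigr)$ as in Definition~\ref{def:cue-conditioned-structural-surprise}. The proof then composes two facts. First, $\Delta$ sees only the recalled subgraphs. Second, the recalled subgraphs do not change when memory outside them changes in the sense the proposition intends.

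\textbf{Step 1: $\Delta$ depends only on the recalled subgraphs.} By the domain restriction D3 of Definition~\ref{def:admissible-structural-divergence-operator}, $\Delta$ is a function whose arguments are the pair $\bigl(R_{t_1}(q),R_{t_2}(q)\bigr)$. It is not a function of $M_{t_1}$ or $M_{t_2}$. Consequently, if two memory states $M_{t_2}$ and $M'_{t_2}$ yield the same recalled subgraph under $q$, they yield identical surprise values. This step is immediate from the definition.

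\textbf{Step 2: ``Not engaged by recall'' means the recall output is unchanged.} I will make precise the phrase ``additions not engaged by recall under $q$.'' Let $M'_{t_2}=M_{t_2}\cup A$, where $A$ is a set of nodes and edges added by ingestion. This is admissible, since by Proposition~1 ingestion only extends memory. I will call $A$ \emph{unengaged} under $q$ if
\[
\psi(q,M'_{t_2})\cap A=\emptyset .
\]
The claim to establish is then $\psi(q,M'_{t_2})=\psi(q,M_{t_2})$.

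\textbf{Step 3: Combine.} Once Step 2 holds, $R'_{t_2}(q)=R_{t_2}(q)$. Step 1 then gives
\[
\Delta\bigl(R'_{t_2}(q),R_{t_1}(q)\bigr)=\Delta\bigl(R_{t_2}(q),R_{t_1}(q)\bigr).
\]
The same argument applies symmetrically to changes at $t_1$. The same reasoning also covers arbitrary changes to $M_t\setminus R_t(q)$, not only additions, under the same non-engagement hypothesis.

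\textbf{Main obstacle.} The hard step is Step 2. The axioms R1--R5 do not by themselves force $\psi$ to ignore unrecalled structure. A recall operator could use global statistics, such as degree normalization or a top-$k$ cutoff over the whole graph. For such an operator, an addition in $A$ could change which nodes of $M_{t_2}$ are selected without any element of $A$ itself being selected.

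My first approach is to read ``not engaged by recall'' as the hypothesis $\psi(q,M'_{t_2})=\psi(q,M_{t_2})$ directly. This is the reading suggested by the remark preceding the proposition, which speaks of unrecalled structure. Under this reading, Step 2 is a definition and the proposition follows from D3 and Step 1.

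As a fallback, I would state an explicit locality condition on $\psi$: its output depends only on the cue-reachable region of memory. I would then note that the proposition holds for every admissible $\psi$ satisfying that condition. I would also record that R5 (read-only recall) guarantees that computing $R_{t_1}(q)$ never alters later memory, so the two recall events do not interfere with each other.
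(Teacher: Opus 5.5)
Your argument matches the paper's proof: D3 makes $\Delta$ a function of the pair $\bigl(R_{t_2}(q),R_{t_1}(q)\bigr)$ alone, and additions not engaged by recall leave $R_{t_2}(q)$ unchanged, so $S(q;t_1,t_2)$ is unchanged. The one difference is that the paper justifies the second step by citing the read-only axiom R5, whereas you correctly observe that R5 only says applying $\psi$ leaves $M_t$ unchanged. It does not say $\psi$'s output is insensitive to structure outside $R_t(q)$, which fails for, e.g., global top-$k$ operators; reading ``not engaged'' as $\psi(q,M'_{t_2})=\psi(q,M_{t_2})$, or adding an explicit locality axiom, is therefore the more accurate way to close that step.
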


\begin{proof}
Structural surprise is defined as divergence among recall-induced subgraphs under the same cue. 
By the domain restriction constraint of Definition~\ref{def:admissible-structural-divergence-operator} (D3), $\Delta$ is defined exclusively over the pair $\bigl(R_{t_2}(q), R_{t_1}(q)\bigr)$, where both subgraphs are constructed under the same cue $q$. 
Its value depends only on the structure of these two subgraphs and on no other portion of $M_{t_1}$ or $M_{t_2}$. 

Any additions to $M_{t_2}$ that are not engaged by recall under $q$ at time $t_2$ leave $R_{t_2}(q)$ unchanged by the read-only constraint of Definition~\ref{def:admissible-recall-operator} (R5). 
Therefore,
\begin{equation}
\Delta\bigl(R_{t_2}(q), R_{t_1}(q)\bigr)
\end{equation}
is unchanged, and structural surprise,
\begin{equation}
S(q; t_1, t_2),
\end{equation}
is invariant under such additions.
\end{proof}

This result ensures that routine memory accumulation does not generate spurious surprise signals. 
Surprise arises only when recall under a given cue engages memory differently than before, rather than as a byproduct of unrelated memory growth.

\subsubsection{Illustrative Example (Non-Overwriting Contextual Recall)}
\label{sec:example}

The following example illustrates how DGMM supports contextual variability in recall while preserving structural persistence.

At time $t_1$, an interaction is ingested from Source~A (e.g., an internal project document), asserting that Project~X succeeded. 
The corresponding Concept node is linked to Elements representing Project~X and Success and is grounded in time $t_1$ and its interaction context.

At time $t_2 > t_1$, a second interaction is ingested from Source~B (e.g., an external analyst report), asserting that Project~X was delayed. 
This produces a distinct Concept node linked to Elements representing Project~X and Delay, grounded in time $t_2$ and its own interaction context. 
Both episodic structures coexist in memory; neither overwrites the other.

Consider two recall cues applied later at time $t_3$:

\begin{itemize}

\item \textbf{Cue:} ``What happened with Project~X?''

Recall constructs a subgraph that includes both Concept nodes, their associated sources, and temporal grounding. 
Interpretation over this recalled structure may yield a proposition such as:
\begin{quote}
Project~X initially succeeded but later experienced delays.
\end{quote}

\item \textbf{Cue:} ``Recent risks related to Project~X.''

Recall constructs a more selective subgraph emphasizing the later interaction, its source, and delay-related elements. 
Interpretation may yield:
\begin{quote}
Project~X presents recent delivery risks.
\end{quote}

\end{itemize}

In both cases, the underlying memory structure is unchanged. 
Differences in interpretation arise solely from cue-conditioned recall and post-recall evaluation. 
Provenance and temporal grounding remain explicit and inspectable in both recalled subgraphs.

\subsubsection{Implications}
\label{sec:implications}

Together, these results demonstrate that DGMM’s representational commitments give rise to formally well-defined properties, including persistence of episodic memory, locality of cue-conditioned surprise, and contextual variability in recall without structural modification. 
These invariants apply to memory representation and recall; downstream interpretive processes, such as proposition generation, are intentionally left unconstrained. 
Importantly, these properties are independent of specific algorithms or learning rules and follow directly from the architecture’s treatment of memory as an explicit, additive, and recall-conditioned substrate.

\section{Positioning Within AI Architectures}
\label{sec:positioning}

DGMM occupies a distinct position within contemporary AI architectures by treating memory as an explicit, persistent, and recall-conditioned substrate. 
This architectural stance directly underpins the capabilities described in Section~\ref{sec:theoretical-capabilities} and differentiates DGMM from approaches in which memory is either implicit, externalized, or largely static.

Large language models encode prior experience implicitly within learned parameters. 
This design enables broad generalization but limits the ability to inspect, compare, or localize changes in memory engagement over time. 
Consequently, deviations in behavior are typically inferred indirectly from outputs or confidence measures. 
In contrast, DGMM’s explicit memory representation enables recall-induced structure to be examined directly, making phenomena such as cue-conditioned surprise (Section~\ref{sec:surrise-structural-deviation}) observable as structural deviation rather than as output-level anomalies.

Retrieval-augmented systems partially address this opacity by introducing access to external data sources. 
However, retrieved content is typically assembled transiently at inference time and does not form a persistent, evolving memory substrate with temporal and provenance grounding. 
As a result, changes in recall behavior are difficult to interpret as either accumulation or drift. 
DGMM’s integration of memory as a first-class architectural component allows recall outcomes to be compared across time and context, supporting analysis of structural drift and provenance-sensitive change (Section~\ref{sec:attribution-source-sensitivity-structural-drift}) without attributing such changes to retraining or retrieval heuristics.

Traditional knowledge graph--based systems provide explicit structure but are commonly oriented toward stable factual representations and curated updates. 
While this supports consistency, it limits the expression of episodic change and contextual reinterpretation. 
Unlike traditional knowledge graph approaches, DGMM does not assume that stored structure corresponds to stable semantic commitments; meaning is constructed at recall time from episodic structure, and consolidation into stable abstractions is neither required nor assumed. 

DGMM extends explicit structure into an episodic and temporal regime, enabling recall to engage different portions of memory under different cues and allowing meaning to evolve at the level of interpretation rather than storage. 
This distinction is central to DGMM’s treatment of proposition and gist generation (Section~\ref{sec:proposition-generation}), where propositions are constructed dynamically from recalled structure rather than stored as fixed semantic commitments.

Among existing systems, \textit{HippoRAG} (Gutiérrez et al., 2024) and \textit{Zep} (Rasmussen et al., 2025) represent the closest architectural precedents to DGMM and warrant direct engagement. 
HippoRAG models hippocampal indexing theory using a knowledge graph with cue-conditioned graph traversal, and Zep implements a temporal knowledge graph architecture for agent memory with time-grounded episodic structure. 
Both share DGMM's commitment to graph-structured memory and temporal grounding. 

The distinctions, however, are architecturally significant. 
Both systems embed semantic representations at ingestion time, fixing interpretive standpoints that cannot be revisited from a different context without modifying stored structure. 
DGMM defers this commitment to post-recall, preserving the capacity for reinterpretation under different cues without modifying memory. 

Neither system provides a formal architectural specification, as they are engineering artifacts without axiomatic treatment, which means their structural properties cannot be verified or reasoned about independently of implementation. 
DGMM's regime partition, recall axioms, and architectural invariants establish precisely these guarantees, which is what makes it suitable as a foundation for a series of companion papers rather than as a standalone system. 

Finally, provenance in both systems is treated as metadata rather than as a first-class structural element participating in the relational grammar. 
DGMM's Source nodes are typed graph citizens that support multi-source coexistence, provenance-sensitive recall, and attribution analysis without reconciliation at storage time. 
These capabilities follow from the architecture rather than from retrieval heuristics.

Taken together, these distinctions position DGMM as an architecture designed to make memory engagement itself a subject of analysis. 
The capabilities described in Section~\ref{sec:surrise-structural-deviation}—structural surprise, drift, attribution, and proposition generation—are direct consequences of this design choice. 
DGMM does not seek to replace parametric or retrieval-based approaches; rather, it provides a complementary framework for systems in which persistence, temporal evolution, and structural inspectability of memory are primary concerns. 

For example, one could hold the memory structure fixed while varying recall cues over time to analyze structural drift independently of training dynamics, or examine how proposition generation changes under controlled perturbations of recall context.

\section{Discussion}
\label{sec:discussion}

DGMM reframes explainability, accountability, and continuity as properties of memory representation rather than as challenges to be addressed post hoc through model interpretation or output analysis. 
By treating memory as an explicit, persistent, and evolving structure, DGMM enables examination of how recall, interpretation, and attribution change over time without requiring access to internal execution procedures or retraining dynamics.

A central implication of this work is the separation of memory storage, semantic interpretation, and downstream realization as distinct architectural concerns. 
In DGMM, memory persists independently of how it is interpreted at recall time, and interpretation may vary across cues or contexts without requiring modification of stored structure. 
This separation contrasts with architectures in which meaning, inference, and memory are tightly coupled, making it difficult to distinguish between accumulation, reinterpretation, and drift.

The capabilities outlined in Section~\ref{sec:theoretical-capabilities}---structural surprise, provenance-sensitive drift, and cue-conditioned proposition generation---follow directly from these representational commitments. 
Importantly, these capabilities are defined in terms of structural observability rather than performance outcomes. 
DGMM provides a framework for analyzing memory-centric behavior in a principled and inspectable manner.

DGMM is best understood as a complementary architectural approach. 
It does not seek to replace parametric language models, retrieval-augmented systems, or knowledge graph technologies, but to offer an alternative substrate in which persistent memory, temporal evolution, and contextual interpretation are first-class concerns. 
As such, DGMM suggests a research program focused on memory-centric AI, in which questions of surprise, drift, attribution, and interpretability are studied as properties of explicit memory structure rather than inferred indirectly from outputs.

Finally, this work is intentionally limited to architectural theory. 
While the proposed framework suggests natural pathways for empirical investigation---such as examining how recall structure evolves, how propositions vary across context, or how governance constraints manifest structurally---these questions are deferred to future work. 
The present contribution is to articulate a coherent representational foundation upon which such investigations may be built.

\section{Conclusion}
\label{sec:conclusion}

This paper introduces the Dynamic Gist-Based Memory Model (DGMM) as a theoretical and architectural contribution to artificial intelligence. 
By treating memory as a first-class, explicit, and persistent substrate, DGMM reframes continuity, provenance, and contextual grounding as properties of memory structure rather than as post hoc analytical challenges.

The primary contribution of this work is the articulation of an architectural foundation in which memory engagement, interpretation, and evolution can be examined directly through recall-induced structure. 
The capabilities discussed follow from representational commitments, independent of task-specific mechanisms or training procedures.

This work is intentionally limited in scope. 
It does not claim empirical performance improvements, nor does it prescribe learning algorithms or realization strategies. 
Instead, DGMM is offered as a research framework for memory-centric AI, providing a coherent basis for future empirical, analytical, and comparative studies of how persistent memory structures support interpretation over time.

%%
%% The acknowledgments section is defined using the "acks" environment
%% (and NOT an unnumbered section). This ensures the proper
%% identification of the section in the article metadata, and the
%% consistent spelling of the heading.
\begin{acks}
The author gratefully acknowledges her dissertation committee-Dr. Kevin Huggins (Chair), Dr. Lola Bautista, Dr. Maria Vaida, Dr. Akeisha Belgrave and Dr. Kayden Jordan-for their guidance, critical feedback, and support throughout the development of this research. Any remaining errors or interpretations are solely those of the author.
\end{acks}

%%
%% The next line prints the references.
\printbibliography

@article{umbach_time_2020,
	title = {Time cells in the human hippocampus and entorhinal cortex support episodic memory},
	volume = {117},
	issn = {0027-8424, 1091-6490},
	url = {https://pnas.org/doi/full/10.1073/pnas.2013250117},
	doi = {10.1073/pnas.2013250117},
	language = {en},
	number = {45},
	urldate = {2024-02-22},
	journal = {Proceedings of the National Academy of Sciences},
	author = {Umbach, Gray and Kantak, Pranish and Jacobs, Joshua and Kahana, Michael and Pfeiffer, Brad E. and Sperling, Michael and Lega, Bradley},
	month = nov,
	year = {2020},
	pages = {28463--28474},
}

@misc{sahoo_systematic_2024,
	title = {A {Systematic} {Survey} of {Prompt} {Engineering} in {Large} {Language} {Models}: {Techniques} and {Applications}},
	shorttitle = {A {Systematic} {Survey} of {Prompt} {Engineering} in {Large} {Language} {Models}},
	url = {http://arxiv.org/abs/2402.07927},
	urldate = {2024-02-23},
	publisher = {arXiv},
	author = {Sahoo, Pranab and Singh, Ayush Kumar and Saha, Sriparna and Jain, Vinija and Mondal, Samrat and Chadha, Aman},
	month = feb,
	year = {2024},
	note = {arXiv:2402.07927 [cs]},
}

@article{peng_knowledge_2023,
	title = {Knowledge {Graphs}: {Opportunities} and {Challenges}},
	volume = {56},
	issn = {0269-2821, 1573-7462},
	shorttitle = {Knowledge {Graphs}},
	url = {https://link.springer.com/10.1007/s10462-023-10465-9},
	doi = {10.1007/s10462-023-10465-9},
	language = {en},
	number = {11},
	urldate = {2024-02-23},
	journal = {Artificial Intelligence Review},
	author = {Peng, Ciyuan and Xia, Feng and Naseriparsa, Mehdi and Osborne, Francesco},
	month = nov,
	year = {2023},
	pages = {13071--13102},
}

@article{schonhaut_neural_2023,
	title = {A neural code for time and space in the human brain},
	volume = {42},
	issn = {22111247},
	url = {https://linkinghub.elsevier.com/retrieve/pii/S2211124723012500},
	doi = {10.1016/j.celrep.2023.113238},
	language = {en},
	number = {11},
	urldate = {2024-02-25},
	journal = {Cell Reports},
	author = {Schonhaut, Daniel R. and Aghajan, Zahra M. and Kahana, Michael J. and Fried, Itzhak},
	month = nov,
	year = {2023},
	pages = {113238},
}

@misc{nylund_time_2023,
	title = {Time is {Encoded} in the {Weights} of {Finetuned} {Language} {Models}},
	url = {http://arxiv.org/abs/2312.13401},
	language = {en},
	urldate = {2024-02-27},
	publisher = {arXiv},
	author = {Nylund, Kai and Gururangan, Suchin and Smith, Noah A.},
	month = dec,
	year = {2023},
	note = {arXiv:2312.13401 [cs]},
}

@article{dhingra_time-aware_2022,
	title = {Time-{Aware} {Language} {Models} as {Temporal} {Knowledge} {Bases}},
	volume = {10},
	issn = {2307-387X},
	url = {https://direct.mit.edu/tacl/article/doi/10.1162/tacl_a_00459/110012/Time-Aware-Language-Models-as-Temporal-Knowledge},
	doi = {10.1162/tacl_a_00459},
	language = {en},
	urldate = {2024-02-27},
	journal = {Transactions of the Association for Computational Linguistics},
	author = {Dhingra, Bhuwan and Cole, Jeremy R. and Eisenschlos, Julian Martin and Gillick, Daniel and Eisenstein, Jacob and Cohen, William W.},
	month = mar,
	year = {2022},
	pages = {257--273},
}

@inproceedings{luu_time_2022,
	address = {Seattle, United States},
	title = {Time {Waits} for {No} {One}! {Analysis} and {Challenges} of {Temporal} {Misalignment}},
	url = {https://aclanthology.org/2022.naacl-main.435},
	doi = {10.18653/v1/2022.naacl-main.435},
	language = {en},
	urldate = {2024-02-27},
	booktitle = {Proceedings of the 2022 {Conference} of the {North} {American} {Chapter} of the {Association} for {Computational} {Linguistics}: {Human} {Language} {Technologies}},
	publisher = {Association for Computational Linguistics},
	author = {Luu, Kelvin and Khashabi, Daniel and Gururangan, Suchin and Mandyam, Karishma and Smith, Noah},
	year = {2022},
	pages = {5944--5958},
}

@article{fayyaz_model_2022,
	title = {A {Model} of {Semantic} {Completion} in {Generative} {Episodic} {Memory}},
	volume = {34},
	issn = {0899-7667, 1530-888X},
	url = {https://direct.mit.edu/neco/article/34/9/1841/112383/A-Model-of-Semantic-Completion-in-Generative},
	doi = {10.1162/neco_a_01520},
	language = {en},
	number = {9},
	urldate = {2024-02-29},
	journal = {Neural Computation},
	author = {Fayyaz, Zahra and Altamimi, Aya and Zoellner, Carina and Klein, Nicole and Wolf, Oliver T. and Cheng, Sen and Wiskott, Laurenz},
	month = aug,
	year = {2022},
	pages = {1841--1870},
}

@book{jones_models_2015,
	title = {Models of {Semantic} {Memory}},
	volume = {1},
	url = {https://academic.oup.com/edited-volume/41261/chapter/350846616},
	doi = {10.1093/oxfordhb/9780199957996.013.11},
	language = {en},
	urldate = {2024-03-01},
	publisher = {Oxford University Press},
	author = {Jones, Michael N. and Willits, Jon and Dennis, Simon},
	editor = {Busemeyer, Jerome R. and Wang, Zheng and Townsend, James T. and Eidels, Ami},
	month = dec,
	year = {2015},
}

@misc{zhao_survey_2023,
	title = {A {Survey} of {Large} {Language} {Models}},
	url = {http://arxiv.org/abs/2303.18223},
	language = {en},
	urldate = {2024-03-01},
	publisher = {arXiv},
	author = {Zhao, Wayne Xin and Zhou, Kun and Li, Junyi and Tang, Tianyi and Wang, Xiaolei and Hou, Yupeng and Min, Yingqian and Zhang, Beichen and Zhang, Junjie and Dong, Zican and Du, Yifan and Yang, Chen and Chen, Yushuo and Chen, Zhipeng and Jiang, Jinhao and Ren, Ruiyang and Li, Yifan and Tang, Xinyu and Liu, Zikang and Liu, Peiyu and Nie, Jian-Yun and Wen, Ji-Rong},
	month = nov,
	year = {2023},
	note = {arXiv:2303.18223 [cs]},
}

@book{hebb_donald_1949,
	title = {Donald {Hebb}: {The} {Organization} of {Behavior}},
	publisher = {John Wiley \& Sons},
	author = {Hebb, D.O.},
	editor = {Aertsen, A.},
	year = {1949},
}

@article{poo_what_2016,
	title = {What is memory? {The} present state of the engram},
	volume = {14},
	issn = {1741-7007},
	shorttitle = {What is memory?},
	url = {http://bmcbiol.biomedcentral.com/articles/10.1186/s12915-016-0261-6},
	doi = {10.1186/s12915-016-0261-6},
	language = {en},
	number = {1},
	urldate = {2024-03-09},
	journal = {BMC Biology},
	author = {Poo, Mu-ming and Pignatelli, Michele and Ryan, Tomás J. and Tonegawa, Susumu and Bonhoeffer, Tobias and Martin, Kelsey C. and Rudenko, Andrii and Tsai, Li-Huei and Tsien, Richard W. and Fishell, Gord and Mullins, Caitlin and Gonçalves, J. Tiago and Shtrahman, Matthew and Johnston, Stephen T. and Gage, Fred H. and Dan, Yang and Long, John and Buzsáki, György and Stevens, Charles},
	month = dec,
	year = {2016},
	pages = {40},
}

@misc{xu_hallucination_2024,
	title = {Hallucination is {Inevitable}: {An} {Innate} {Limitation} of {Large} {Language} {Models}},
	shorttitle = {Hallucination is {Inevitable}},
	url = {http://arxiv.org/abs/2401.11817},
	language = {en},
	urldate = {2024-03-17},
	publisher = {arXiv},
	author = {Xu, Ziwei and Jain, Sanjay and Kankanhalli, Mohan},
	month = jan,
	year = {2024},
	note = {arXiv:2401.11817 [cs]},
}

@misc{zhao_set_2024,
	title = {Set the {Clock}: {Temporal} {Alignment} of {Pretrained} {Language} {Models}},
	shorttitle = {Set the {Clock}},
	url = {http://arxiv.org/abs/2402.16797},
	urldate = {2024-03-17},
	publisher = {arXiv},
	author = {Zhao, Bowen and Brumbaugh, Zander and Wang, Yizhong and Hajishirzi, Hannaneh and Smith, Noah A.},
	month = feb,
	year = {2024},
	note = {arXiv:2402.16797 [cs]},
}

@misc{tonmoy_comprehensive_2024,
	title = {A {Comprehensive} {Survey} of {Hallucination} {Mitigation} {Techniques} in {Large} {Language} {Models}},
	url = {http://arxiv.org/abs/2401.01313},
	urldate = {2024-03-17},
	publisher = {arXiv},
	author = {Tonmoy, S. M. Towhidul Islam and Zaman, S. M. Mehedi and Jain, Vinija and Rani, Anku and Rawte, Vipula and Chadha, Aman and Das, Amitava},
	month = jan,
	year = {2024},
	note = {arXiv:2401.01313 [cs]},
}

@inproceedings{bender_dangers_2021,
	address = {Virtual Event Canada},
	title = {On the {Dangers} of {Stochastic} {Parrots}: {Can} {Language} {Models} {Be} {Too} {Big}? },
	isbn = {978-1-4503-8309-7},
	shorttitle = {On the {Dangers} of {Stochastic} {Parrots}},
	url = {https://dl.acm.org/doi/10.1145/3442188.3445922},
	doi = {10.1145/3442188.3445922},
	language = {en},
	urldate = {2024-03-17},
	booktitle = {Proceedings of the 2021 {ACM} {Conference} on {Fairness}, {Accountability}, and {Transparency}},
	publisher = {ACM},
	author = {Bender, Emily M. and Gebru, Timnit and McMillan-Major, Angelina and Shmitchell, Shmargaret},
	month = mar,
	year = {2021},
	pages = {610--623},
}

@inproceedings{lewis_retrieval-augmented_2020,
	title = {Retrieval-{Augmented} {Generation} for {Knowledge}-{Intensive} {NLP} {Tasks}},
	volume = {33},
	url = {https://proceedings.neurips.cc/paper_files/paper/2020/hash/6b493230205f780e1bc26945df7481e5-Abstract.html},
	urldate = {2024-03-24},
	booktitle = {Advances in {Neural} {Information} {Processing} {Systems}},
	publisher = {Curran Associates, Inc.},
	author = {Lewis, Patrick and Perez, Ethan and Piktus, Aleksandra and Petroni, Fabio and Karpukhin, Vladimir and Goyal, Naman and Küttler, Heinrich and Lewis, Mike and Yih, Wen-tau and Rocktäschel, Tim and Riedel, Sebastian and Kiela, Douwe},
	year = {2020},
	pages = {9459--9474},
}

@misc{wu_continual_2024,
	title = {Continual {Learning} for {Large} {Language} {Models}: {A} {Survey}},
	shorttitle = {Continual {Learning} for {Large} {Language} {Models}},
	url = {http://arxiv.org/abs/2402.01364},
	urldate = {2024-03-24},
	publisher = {arXiv},
	author = {Wu, Tongtong and Luo, Linhao and Li, Yuan-Fang and Pan, Shirui and Vu, Thuy-Trang and Haffari, Gholamreza},
	month = feb,
	year = {2024},
	note = {arXiv:2402.01364 [cs]},
}

@book{kant_critique_1781,
	address = {London},
	series = {Penguin classics},
	title = {Critique of pure reason},
	isbn = {978-0-14-044747-7},
	language = {eng},
	publisher = {Penguin Books},
	author = {Kant, Immanuel},
	editor = {Weigelt, Marcus},
	translator = {Müller, Friedrich Max},
	year = {1781},
}

@article{luo_local_2024,
	title = {Local {Interpretations} for {Explainable} {Natural} {Language} {Processing}: {A} {Survey}},
	issn = {0360-0300, 1557-7341},
	shorttitle = {Local {Interpretations} for {Explainable} {Natural} {Language} {Processing}},
	url = {http://arxiv.org/abs/2103.11072},
	doi = {10.1145/3649450},
	urldate = {2024-03-24},
	journal = {ACM Computing Surveys},
	author = {Luo, Siwen and Ivison, Hamish and Han, Caren and Poon, Josiah},
	month = mar,
	year = {2024},
	note = {arXiv:2103.11072 [cs]},
	pages = {3649450},
}

@inproceedings{ferrario_how_2022,
	address = {Seoul Republic of Korea},
	title = {How {Explainability} {Contributes} to {Trust} in {AI}},
	isbn = {978-1-4503-9352-2},
	url = {https://dl.acm.org/doi/10.1145/3531146.3533202},
	doi = {10.1145/3531146.3533202},
	language = {en},
	urldate = {2024-03-24},
	booktitle = {2022 {ACM} {Conference} on {Fairness}, {Accountability}, and {Transparency}},
	publisher = {ACM},
	author = {Ferrario, Andrea and Loi, Michele},
	month = jun,
	year = {2022},
	pages = {1457--1466},
}

@book{hegel_georg_1807,
	edition = {1},
	title = {Georg {Wilhelm} {Friedrich} {Hegel}: {\textless}{I}{\textgreater}{The} {Phenomenology} of {Spirit}{\textless}/{I}{\textgreater}},
	isbn = {978-1-139-05049-4 978-0-521-85579-2},
	shorttitle = {Georg {Wilhelm} {Friedrich} {Hegel}},
	url = {https://www.cambridge.org/core/product/identifier/9781139050494/type/book},
	doi = {10.1017/9781139050494},
	language = {en},
	urldate = {2024-03-25},
	publisher = {Cambridge University Press},
	author = {Hegel, Georg Wilhelm Fredrich},
	editor = {Pinkard, Terry and Baur, Michael},
	year = {1807},
}

@book{derrida_grammatology_1998,
	address = {Baltimore},
	edition = {Corrected ed},
	title = {Of grammatology},
	isbn = {978-0-8018-5830-7},
	language = {en},
	publisher = {Johns Hopkins University Press},
	author = {Derrida, Jacques},
	year = {1998},
}

@article{jiang_wikipedia-based_2017,
	title = {Wikipedia-based information content and semantic similarity computation},
	volume = {53},
	issn = {03064573},
	url = {https://linkinghub.elsevier.com/retrieve/pii/S0306457316303934},
	doi = {10.1016/j.ipm.2016.09.001},
	language = {en},
	number = {1},
	urldate = {2024-08-24},
	journal = {Information Processing \& Management},
	author = {Jiang, Yuncheng and Bai, Wen and Zhang, Xiaopei and Hu, Jiaojiao},
	month = jan,
	year = {2017},
	pages = {248--265},
}

@inproceedings{yao_tree_2023,
	title = {Tree of {Thoughts}: {Deliberate} {Problem} {Solving} with {Large} {Language} {Models}},
	language = {en},
	booktitle = {Advances in {Neural} {Information} {Processing} {Systems} 36},
	author = {Yao, Shunyu and Yu, Dian and Zhao, Jeffrey and Shafran, Izhak and Griffiths, Thomas L and Cao, Yuan and Narasimhan, Karthik},
	year = {2023},
}

@misc{das_larimar_2024,
	title = {Larimar: {Large} {Language} {Models} with {Episodic} {Memory} {Control}},
	shorttitle = {Larimar},
	url = {http://arxiv.org/abs/2403.11901},
	language = {en},
	urldate = {2024-08-29},
	publisher = {arXiv},
	author = {Das, Payel and Chaudhury, Subhajit and Nelson, Elliot and Melnyk, Igor and Swaminathan, Sarath and Dai, Sihui and Lozano, Aurélie and Kollias, Georgios and Chenthamarakshan, Vijil and Jiří and Navrátil and Dan, Soham and Chen, Pin-Yu},
	month = aug,
	year = {2024},
	note = {arXiv:2403.11901 [cs]},
}

@misc{anokhin_arigraph_2024,
	title = {{AriGraph}: {Learning} {Knowledge} {Graph} {World} {Models} with {Episodic} {Memory} for {LLM} {Agents}},
	shorttitle = {{AriGraph}},
	url = {http://arxiv.org/abs/2407.04363},
	language = {en},
	urldate = {2024-08-29},
	publisher = {arXiv},
	author = {Anokhin, Petr and Semenov, Nikita and Sorokin, Artyom and Evseev, Dmitry and Burtsev, Mikhail and Burnaev, Evgeny},
	month = jul,
	year = {2024},
	note = {arXiv:2407.04363 [cs]},
}

@misc{gutierrez_hipporag_2024,
	title = {{HippoRAG}: {Neurobiologically} {Inspired} {Long}-{Term} {Memory} for {Large} {Language} {Models}},
	shorttitle = {{HippoRAG}},
	url = {http://arxiv.org/abs/2405.14831},
	language = {en},
	urldate = {2024-08-29},
	publisher = {arXiv},
	author = {Gutiérrez, Bernal Jiménez and Shu, Yiheng and Gu, Yu and Yasunaga, Michihiro and Su, Yu},
	month = may,
	year = {2024},
	note = {arXiv:2405.14831 [cs]},
}

@article{biswas_knowledge_2023,
	title = {Knowledge {Graph} {Embeddings}: {Open} {Challenges} and {Opportunities}},
	volume = {1},
	copyright = {Creative Commons Attribution 4.0 International license, info:eu-repo/semantics/openAccess},
	issn = {2942-7517},
	shorttitle = {Knowledge {Graph} {Embeddings}},
	url = {https://drops.dagstuhl.de/entities/document/10.4230/TGDK.1.1.4},
	doi = {10.4230/TGDK.1.1.4},
	language = {en},
	number = {1},
	urldate = {2025-01-26},
	journal = {Transactions on Graph Data and Knowledge (TGDK)},
	publisher = {Schloss Dagstuhl – Leibniz-Zentrum für Informatik},
	author = {Biswas, Russa and Kaffee, Lucie-Aimée and Cochez, Michael and Dumbrava, Stefania and Jendal, Theis E. and Lissandrini, Matteo and Lopez, Vanessa and Mencía, Eneldo Loza and Paulheim, Heiko and Sack, Harald and Vakaj, Edlira Kalemi and de Melo, Gerard},
	collaborator = {Hogan, Aidan and Horrocks, Ian and Hotho, Andreas and Kagal, Lalana},
	year = {2023},
	note = {Artwork Size: 32 pages, 1553014 bytes
Medium: application/pdf},
	pages = {4:1--4:32},
}

@misc{liu_incremental_2022,
	title = {Incremental {Prompting}: {Episodic} {Memory} {Prompt} for {Lifelong} {Event} {Detection}},
	shorttitle = {Incremental {Prompting}},
	url = {http://arxiv.org/abs/2204.07275},
	doi = {10.48550/arXiv.2204.07275},
	language = {en},
	urldate = {2025-03-22},
	publisher = {arXiv},
	author = {Liu, Minqian and Chang, Shiyu and Huang, Lifu},
	month = sep,
	year = {2022},
	note = {arXiv:2204.07275 [cs]},
}

@misc{kong_dynamic_2024,
	title = {Dynamic {Semantic} {Memory} {Retention} in {Large} {Language} {Models}: {An} {Exploration} of {Spontaneous} {Retrieval} {Mechanisms}},
	copyright = {http://creativecommons.org/licenses/by-nc-nd/4.0/},
	shorttitle = {Dynamic {Semantic} {Memory} {Retention} in {Large} {Language} {Models}},
	url = {https://www.authorea.com/users/850341/articles/1237030-dynamic-semantic-memory-retention-in-large-language-models-an-exploration-of-spontaneous-retrieval-mechanisms?commit=53a3b49dda2b55ad0e99aae294e2e82539cee33b},
	doi = {10.22541/au.173040837.79423019/v1},
	language = {en},
	urldate = {2025-04-05},
	author = {Kong, Juri and Liang, Hong and Zhang, Yuan and Li, Hongxiang and Shen, Pengcheng and Lu, Fang},
	month = oct,
	year = {2024},
}

@misc{dakat_enhancing_2024,
	title = {Enhancing {Large} {Language} {Models} through {Dynamic} {Contextual} {Memory} {Embedding}: {A} {Technical} {Evaluation}},
	copyright = {https://creativecommons.org/licenses/by-nc-sa/4.0/},
	shorttitle = {Enhancing {Large} {Language} {Models} through {Dynamic} {Contextual} {Memory} {Embedding}},
	url = {https://www.techrxiv.org/users/845354/articles/1233973-enhancing-large-language-models-through-dynamic-contextual-memory-embedding-a-technical-evaluation?commit=f16319ce67f2d9a441f862efb817431689dd7778},
	doi = {10.36227/techrxiv.172978236.68956715/v1},
	language = {en},
	urldate = {2025-04-05},
	author = {Dakat, Igor and Langley, Isadora and Montgomery, Lysander and Bennett, Rosalin and Blackwood, Lysandra},
	month = oct,
	year = {2024},
}

@misc{bernar_exploring_2024,
	title = {Exploring the {Concept} of {Dynamic} {Memory} {Persistence} in {Large} {Language} {Models} for {Optimized} {Contextual} {Comprehension}},
	copyright = {http://creativecommons.org/licenses/by-nc-nd/4.0/},
	url = {https://www.authorea.com/users/852953/articles/1238675-exploring-the-concept-of-dynamic-memory-persistence-in-large-language-models-for-optimized-contextual-comprehension?commit=884a5060f1fdbfd87f90fc3a888f4c26fa4045cc},
	doi = {10.22541/au.173101368.88048313/v1},
	language = {en},
	urldate = {2025-04-05},
	author = {Bernar, Brian and Winters, Harrison and Fischer, Laurence and Meyer, Brandon and Gyllenborg, Mitchell},
	month = nov,
	year = {2024},
}

@misc{zhang_survey_2025,
	title = {A {Survey} of {Graph} {Retrieval}-{Augmented} {Generation} for {Customized} {Large} {Language} {Models}},
	url = {http://arxiv.org/abs/2501.13958},
	doi = {10.48550/arXiv.2501.13958},
	language = {en},
	urldate = {2025-04-06},
	publisher = {arXiv},
	author = {Zhang, Qinggang and Chen, Shengyuan and Bei, Yuanchen and Yuan, Zheng and Zhou, Huachi and Hong, Zijin and Dong, Junnan and Chen, Hao and Chang, Yi and Huang, Xiao},
	month = jan,
	year = {2025},
	note = {arXiv:2501.13958 [cs]},
}

@misc{brown_language_2020,
	title = {Language {Models} are {Few}-{Shot} {Learners}},
	url = {http://arxiv.org/abs/2005.14165},
	doi = {10.48550/arXiv.2005.14165},
	language = {en},
	urldate = {2025-04-06},
	publisher = {arXiv},
	author = {Brown, Tom B. and Mann, Benjamin and Ryder, Nick and Subbiah, Melanie and Kaplan, Jared and Dhariwal, Prafulla and Neelakantan, Arvind and Shyam, Pranav and Sastry, Girish and Askell, Amanda and Agarwal, Sandhini and Herbert-Voss, Ariel and Krueger, Gretchen and Henighan, Tom and Child, Rewon and Ramesh, Aditya and Ziegler, Daniel M. and Wu, Jeffrey and Winter, Clemens and Hesse, Christopher and Chen, Mark and Sigler, Eric and Litwin, Mateusz and Gray, Scott and Chess, Benjamin and Clark, Jack and Berner, Christopher and McCandlish, Sam and Radford, Alec and Sutskever, Ilya and Amodei, Dario},
	month = jul,
	year = {2020},
	note = {arXiv:2005.14165 [cs]},
}

@misc{gai_zhenhua_achieving_2024,
	title = {Achieving {Higher} {Factual} {Accuracy} in {Llama} {LLM} with {Weighted} {Distribution} of {Retrieval}-{Augmented} {Generation}},
	copyright = {https://creativecommons.org/licenses/by/4.0/legalcode},
	url = {https://osf.io/ctw8v},
	doi = {10.31219/osf.io/ctw8v},
	language = {en},
	urldate = {2025-04-20},
	author = {{Gai, Zhenhua} and Tong, Lianxin and Ge, Quan},
	month = may,
	year = {2024},
}

@misc{rasmussen_zep_2025,
	title = {Zep: {A} {Temporal} {Knowledge} {Graph} {Architecture} for {Agent} {Memory}},
	shorttitle = {Zep},
	url = {http://arxiv.org/abs/2501.13956},
	doi = {10.48550/arXiv.2501.13956},
	language = {en},
	urldate = {2025-08-10},
	publisher = {arXiv},
	author = {Rasmussen, Preston and Paliychuk, Pavlo and Beauvais, Travis and Ryan, Jack and Chalef, Daniel},
	month = jan,
	year = {2025},
	note = {arXiv:2501.13956 [cs]},
}

@misc{zhang_respecting_2025,
	title = {Respecting {Temporal}-{Causal} {Consistency}: {Entity}-{Event} {Knowledge} {Graphs} for {Retrieval}-{Augmented} {Generation}},
	shorttitle = {Respecting {Temporal}-{Causal} {Consistency}},
	url = {http://arxiv.org/abs/2506.05939},
	doi = {10.48550/arXiv.2506.05939},
	language = {en},
	urldate = {2025-10-11},
	publisher = {arXiv},
	author = {Zhang, Ze Yu and Li, Zitao and Li, Yaliang and Ding, Bolin and Low, Bryan Kian Hsiang},
	month = jun,
	year = {2025},
	note = {arXiv:2506.05939 [cs]},
}

@article{knez_event-centric_2023,
	title = {Event-{Centric} {Temporal} {Knowledge} {Graph} {Construction}: {A} {Survey}},
	volume = {11},
	issn = {2227-7390},
	shorttitle = {Event-{Centric} {Temporal} {Knowledge} {Graph} {Construction}},
	url = {https://www.mdpi.com/2227-7390/11/23/4852},
	doi = {10.3390/math11234852},
	language = {en},
	number = {23},
	urldate = {2025-10-11},
	journal = {Mathematics},
	author = {Knez, Timotej and Žitnik, Slavko},
	month = dec,
	year = {2023},
	pages = {4852},
}

@misc{cai_survey_2024,
	title = {A {Survey} on {Temporal} {Knowledge} {Graph}: {Representation} {Learning} and {Applications}},
	shorttitle = {A {Survey} on {Temporal} {Knowledge} {Graph}},
	url = {http://arxiv.org/abs/2403.04782},
	doi = {10.48550/arXiv.2403.04782},
	language = {en},
	urldate = {2025-10-11},
	publisher = {arXiv},
	author = {Cai, Li and Mao, Xin and Zhou, Yuhao and Long, Zhaoguang and Wu, Changxu and Lan, Man},
	month = mar,
	year = {2024},
	note = {arXiv:2403.04782 [cs]},
}

@inproceedings{saxena_question_2021,
	address = {Online},
	title = {Question {Answering} {Over} {Temporal} {Knowledge} {Graphs}},
	url = {https://aclanthology.org/2021.acl-long.520},
	doi = {10.18653/v1/2021.acl-long.520},
	language = {en},
	urldate = {2025-10-11},
	booktitle = {Proceedings of the 59th {Annual} {Meeting} of the {Association} for {Computational} {Linguistics} and the 11th {International} {Joint} {Conference} on {Natural} {Language} {Processing} ({Volume} 1: {Long} {Papers})},
	publisher = {Association for Computational Linguistics},
	author = {Saxena, Apoorv and Chakrabarti, Soumen and Talukdar, Partha},
	year = {2021},
	pages = {6663--6676},
}

@misc{sun_timelinekgqa_2025,
	title = {{TimelineKGQA}: {A} {Comprehensive} {Question}-{Answer} {Pair} {Generator} for {Temporal} {Knowledge} {Graphs}},
	shorttitle = {{TimelineKGQA}},
	url = {http://arxiv.org/abs/2501.04343},
	doi = {10.48550/arXiv.2501.04343},
	language = {en},
	urldate = {2025-10-11},
	publisher = {arXiv},
	author = {Sun, Qiang and Li, Sirui and Huynh, Du and Reynolds, Mark and Liu, Wei},
	month = jan,
	year = {2025},
	note = {arXiv:2501.04343 [cs]},
}

@article{sadeghian_chronor_2021,
	title = {{ChronoR}: {Rotation} {Based} {Temporal} {Knowledge} {Graph} {Embedding}},
	volume = {35},
	issn = {2374-3468, 2159-5399},
	shorttitle = {{ChronoR}},
	url = {https://ojs.aaai.org/index.php/AAAI/article/view/16802},
	doi = {10.1609/aaai.v35i7.16802},
	language = {en},
	number = {7},
	urldate = {2025-10-11},
	journal = {Proceedings of the AAAI Conference on Artificial Intelligence},
	author = {Sadeghian, Ali and Armandpour, Mohammadreza and Colas, Anthony and Wang, Daisy Zhe},
	month = may,
	year = {2021},
	pages = {6471--6479},
}

@misc{yang_beyond_2025,
	title = {Beyond {Single} {Pass}, {Looping} {Through} {Time}: {KG}-{IRAG} with {Iterative} {Knowledge} {Retrieval}},
	shorttitle = {Beyond {Single} {Pass}, {Looping} {Through} {Time}},
	url = {http://arxiv.org/abs/2503.14234},
	doi = {10.48550/arXiv.2503.14234},
	language = {en},
	urldate = {2025-10-11},
	publisher = {arXiv},
	author = {Yang, Ruiyi and Xue, Hao and Razzak, Imran and Hacid, Hakim and Salim, Flora D.},
	month = may,
	year = {2025},
	note = {arXiv:2503.14234 [cs]},
}

@article{bullmore_economy_2012,
	title = {The economy of brain network organization},
	volume = {13},
	copyright = {http://www.springer.com/tdm},
	issn = {1471-003X, 1471-0048},
	url = {https://www.nature.com/articles/nrn3214},
	doi = {10.1038/nrn3214},
	language = {en},
	number = {5},
	urldate = {2025-10-12},
	journal = {Nature Reviews Neuroscience},
	author = {Bullmore, Ed and Sporns, Olaf},
	month = may,
	year = {2012},
	pages = {336--349},
}

@inproceedings{duan_few-shot_2023,
	title = {Few-shot {Generation} via {Recalling} {Brain}-{Inspired} {Episodic}-{Semantic} {Memory}},
	url = {https://openreview.net/forum?id=dxPcdEeQk9},
	language = {en},
	urldate = {2025-11-13},
	author = {Duan, Zhibin and Zhiyi, Lv and Wang, Chaojie and Chen, Bo and An, Bo and Zhou, Mingyuan},
	month = nov,
	year = {2023},
}

@book{foucault_archaeology_1972,
	address = {Westminster},
	title = {The {Archaeology} of {Knowledge}},
	isbn = {978-0-394-71106-5 978-0-307-81925-3},
	language = {eng},
	publisher = {Knopf Doubleday Publishing Group},
	author = {Foucault, Michel},
	year = {1972},
}

@inproceedings{miao_episodic_2024,
	address = {Bangkok, Thailand},
	title = {Episodic {Memory} {Retrieval} from {LLMs}: {A} {Neuromorphic} {Mechanism} to {Generate} {Commonsense} {Counterfactuals} for {Relation} {Extraction}},
	shorttitle = {Episodic {Memory} {Retrieval} from {LLMs}},
	url = {https://aclanthology.org/2024.findings-acl.146/},
	doi = {10.18653/v1/2024.findings-acl.146},
	urldate = {2025-11-14},
	booktitle = {Findings of the {Association} for {Computational} {Linguistics}: {ACL} 2024},
	publisher = {Association for Computational Linguistics},
	author = {Miao, Xin and Li, Yongqi and Zhou, Shen and Qian, Tieyun},
	editor = {Ku, Lun-Wei and Martins, Andre and Srikumar, Vivek},
	month = aug,
	year = {2024},
	pages = {2489--2511},
}

@article{kumar_semantic_2021,
	title = {Semantic memory: {A} review of methods, models, and current challenges},
	volume = {28},
	issn = {1069-9384, 1531-5320},
	shorttitle = {Semantic memory},
	url = {https://link.springer.com/10.3758/s13423-020-01792-x},
	doi = {10.3758/s13423-020-01792-x},
	language = {en},
	number = {1},
	urldate = {2025-11-25},
	journal = {Psychonomic Bulletin \& Review},
	author = {Kumar, Abhilasha A.},
	month = feb,
	year = {2021},
	pages = {40--80},
}

@misc{nagy_interplay_2025,
	title = {Interplay of episodic and semantic memory arises from adaptive compression},
	url = {https://osf.io/emky9_v1},
	doi = {10.31234/osf.io/emky9},
	language = {en},
	urldate = {2025-11-28},
	publisher = {PsyArXiv},
	author = {Nagy, David G. and Orban, Gergo and Wu, Charley M},
	month = jan,
	year = {2025},
}

@misc{dalessandro_genesis_2025,
	title = {{GENESIS}: {A} {Generative} {Model} of {Episodic}-{Semantic} {Interaction}},
	shorttitle = {{GENESIS}},
	url = {http://arxiv.org/abs/2510.15828},
	doi = {10.48550/arXiv.2510.15828},
	language = {en},
	urldate = {2025-11-28},
	publisher = {arXiv},
	author = {D'Alessandro, Marco and D'Amato, Leo and Elkano, Mikel and Uriz, Mikel and Pezzulo, Giovanni},
	month = oct,
	year = {2025},
	note = {arXiv:2510.15828 [q-bio]},
}

@article{spens_generative_2024,
	title = {A generative model of memory construction and consolidation},
	volume = {8},
	issn = {2397-3374},
	url = {https://www.nature.com/articles/s41562-023-01799-z},
	doi = {10.1038/s41562-023-01799-z},
	language = {en},
	number = {3},
	urldate = {2025-11-28},
	journal = {Nature Human Behaviour},
	author = {Spens, Eleanor and Burgess, Neil},
	month = jan,
	year = {2024},
	pages = {526--543},
}

@article{howard_distributed_2002,
	title = {A {Distributed} {Representation} of {Temporal} {Context}},
	volume = {46},
	copyright = {https://www.elsevier.com/tdm/userlicense/1.0/},
	issn = {00222496},
	url = {https://linkinghub.elsevier.com/retrieve/pii/S0022249601913884},
	doi = {10.1006/jmps.2001.1388},
	language = {en},
	number = {3},
	urldate = {2025-11-29},
	journal = {Journal of Mathematical Psychology},
	author = {Howard, Marc W. and Kahana, Michael J.},
	month = jun,
	year = {2002},
	pages = {269--299},
}

@misc{spens_hippocampo-neocortical_2025,
	title = {Hippocampo-neocortical interaction as compressive retrieval-augmented generation},
	language = {en},
	publisher = {bioRxiv},
	author = {Spens, Eleanor and Burgess, Neil},
	year = {2025},
}

@article{kirkpatrick_overcoming_2017,
	title = {Overcoming catastrophic forgetting in neural networks},
	volume = {114},
	issn = {0027-8424, 1091-6490},
	url = {https://pnas.org/doi/full/10.1073/pnas.1611835114},
	doi = {10.1073/pnas.1611835114},
	language = {en},
	number = {13},
	urldate = {2026-04-23},
	journal = {Proceedings of the National Academy of Sciences},
	author = {Kirkpatrick, James and Pascanu, Razvan and Rabinowitz, Neil and Veness, Joel and Desjardins, Guillaume and Rusu, Andrei A. and Milan, Kieran and Quan, John and Ramalho, Tiago and Grabska-Barwinska, Agnieszka and Hassabis, Demis and Clopath, Claudia and Kumaran, Dharshan and Hadsell, Raia},
	month = mar,
	year = {2017},
	pages = {3521--3526},
}

@article{lange_continual_2021,
	title = {A continual learning survey: {Defying} forgetting in classification tasks},
	issn = {0162-8828, 2160-9292, 1939-3539},
	shorttitle = {A continual learning survey},
	url = {http://arxiv.org/abs/1909.08383},
	doi = {10.1109/TPAMI.2021.3057446},
	language = {en},
	urldate = {2026-04-23},
	journal = {IEEE Transactions on Pattern Analysis and Machine Intelligence},
	author = {Lange, Matthias De and Aljundi, Rahaf and Masana, Marc and Parisot, Sarah and Jia, Xu and Leonardis, Ales and Slabaugh, Gregory and Tuytelaars, Tinne},
	year = {2021},
	note = {arXiv:1909.08383 [cs]},
	pages = {1--1},
}

@incollection{thrun_child_1998,
	address = {Boston, MA},
	title = {Child: {A} {First} {Step} {Towards} {Continual} {Learning}},
	isbn = {978-1-4613-7527-2 978-1-4615-5529-2},
	shorttitle = {Child},
	url = {http://link.springer.com/10.1007/978-1-4615-5529-2_11},
	doi = {10.1007/978-1-4615-5529-2_11},
	language = {en},
	urldate = {2026-04-23},
	booktitle = {Learning to {Learn}},
	publisher = {Springer US},
	author = {Ring, Mark B.},
	editor = {Thrun, Sebastian and Pratt, Lorien},
	year = {1998},
	pages = {261--292},
}

@article{thrun_lifelong_1995,
	title = {Lifelong robot learning},
	volume = {15},
	copyright = {https://www.elsevier.com/tdm/userlicense/1.0/},
	issn = {09218890},
	url = {https://linkinghub.elsevier.com/retrieve/pii/092188909500004Y},
	doi = {10.1016/0921-8890(95)00004-Y},
	language = {en},
	number = {1-2},
	urldate = {2026-04-23},
	journal = {Robotics and Autonomous Systems},
	author = {Thrun, Sebastian and Mitchell, Tom M.},
	month = jul,
	year = {1995},
	pages = {25--46},
}

@article{van_de_ven_three_2022,
	title = {Three types of incremental learning},
	volume = {4},
	issn = {2522-5839},
	url = {https://www.nature.com/articles/s42256-022-00568-3},
	doi = {10.1038/s42256-022-00568-3},
	language = {en},
	number = {12},
	urldate = {2026-04-23},
	journal = {Nature Machine Intelligence},
	author = {Van De Ven, Gido M. and Tuytelaars, Tinne and Tolias, Andreas S.},
	month = dec,
	year = {2022},
	pages = {1185--1197},
}

@misc{wang_comprehensive_2024,
	title = {A {Comprehensive} {Survey} of {Continual} {Learning}: {Theory}, {Method} and {Application}},
	shorttitle = {A {Comprehensive} {Survey} of {Continual} {Learning}},
	url = {http://arxiv.org/abs/2302.00487},
	doi = {10.48550/arXiv.2302.00487},
	language = {en},
	urldate = {2026-04-23},
	publisher = {arXiv},
	author = {Wang, Liyuan and Zhang, Xingxing and Su, Hang and Zhu, Jun},
	month = feb,
	year = {2024},
	note = {arXiv:2302.00487 [cs]},
}

\end{document}